\def\eqref#1{equation~\ref{#1}}
\def\1{\bm{1}}
\def\rvh{{\mathbf{h}}}
\def\rvx{{\mathbf{x}}}
\def\rvz{{\mathbf{z}}}
\DeclareMathAlphabet{\mathsfit}{\encodingdefault}{\sfdefault}{m}{sl}
\SetMathAlphabet{\mathsfit}{bold}{\encodingdefault}{\sfdefault}{bx}{n}
\newcommand{\KL}{D_{\mathrm{KL}}}
\DeclareMathOperator*{\argmin}{arg\,min}
\newcommand{\ip}[2]{\left\langle #1,\, #2\right\rangle}
\newcommand{\norm}[1]{\left\lVert#1\right\rVert}
\DeclareMathOperator*{\minimize}{minimize}
\newcommand{\ty}{{\tilde y}}
\newcommand{\bn}{{\bar n}}
\newcommand{\bd}{{\bar d}}
\newcommand{\wi}{{w_{\mathrm{init}}}}
\newcommand{\bU}{{\tilde U}}
\definecolor{Red}{rgb}{0.768, 0.054, 0.054}
\definecolor{Blue}{rgb}{0.152, 0.294, 0.925}
\definecolor{Green}{rgb}{0,0.4,0.7}
\title{Self-Distillation for\\ Further Pre-training of Transformers}
\author{Seanie Lee$^{1\dagger}$\quad Minki Kang$^{1,2}$ \quad 
 \textbf{Juho Lee}$^{1,2}$\quad \textbf{Sung Ju Hwang}$^{1}$ \quad\textbf{Kenji Kawaguchi} $^{3}$\thanks{Corresponding Author \: $~^\dagger$The work was done while the author was an intern at NUS. }\\
KAIST$^{1}$,  AITRICS$^{2}$, National University of Singapore$^3$ \\
  \texttt{\{lsnfamily02, zzxc1133, juholee, sjhwang82\}@kaist.ac.kr} \\
  \texttt{kenji@comp.nus.edu.sg}
}
\begin{document}

\maketitle

\begin{abstract}  
The application of pre-training large transformer models on massive amounts of unlabeled data and fine-tuning them on labeled datasets for diverse downstream tasks has demonstrated remarkable success in various vision and natural language processing tasks. However, the direct fine-tuning approach may result in suboptimal performance if there exists a significant discrepancy between the pre-training and fine-tuning domains. To address this issue, some previous studies have proposed further pre-training strategies to continue pre-training the model on the target unlabeled dataset before fine-tuning. However, these strategies are limited to language models and may result in overfitting when applied to Vision Transformers. To overcome this limitation, we present a novel approach of self-distillation as a regularization method for the further pre-training stage. Our method first further pre-trains the initial pre-trained model on the target unlabeled data, and then uses it as a teacher for self-distillation. Then we take the same initial pre-trained model as a student, and enforce its hidden representations to be close to those of the teacher while optimizing the student with a masked auto-encoding objective. Our experiments demonstrate the superiority of self-distillation over relevant baselines on various benchmark datasets for image and text classification tasks. Furthermore, we provide a theoretical analysis of our proposed method using a simplified model to shed light on how self-distillation for further pre-training can potentially enhance the performance of downstream tasks.
\end{abstract}  
\vspace{-0.1in}
\section{Introduction}
\vspace{-0.12in}
Pre-trained transformer models~\citep{bert, gpt-3, roberta, mae} have been effective on various vision and natural language processing tasks. The pre-trained models learn general representation from a large volume of unlabeled data so that they generalize well to various downstream tasks when they are fine-tuned on each task with a labeled dataset. However, in many of real-world applications, it requires a considerable amount of effort to adapt the pre-trained model to a specific downstream task domain since there exists a significant distributional discrepancy between data for the pre-training and fine-tuning stage. Moreover, it is difficult to collect a large amount of labeled data for such specific domains, which renders adaptation of the pre-trained model to downstream tasks more challenging.

\begin{wrapfigure}{r}{0.33\textwidth}
\vspace{-0.4in}
 \centering
  \includegraphics[width=0.33\textwidth]{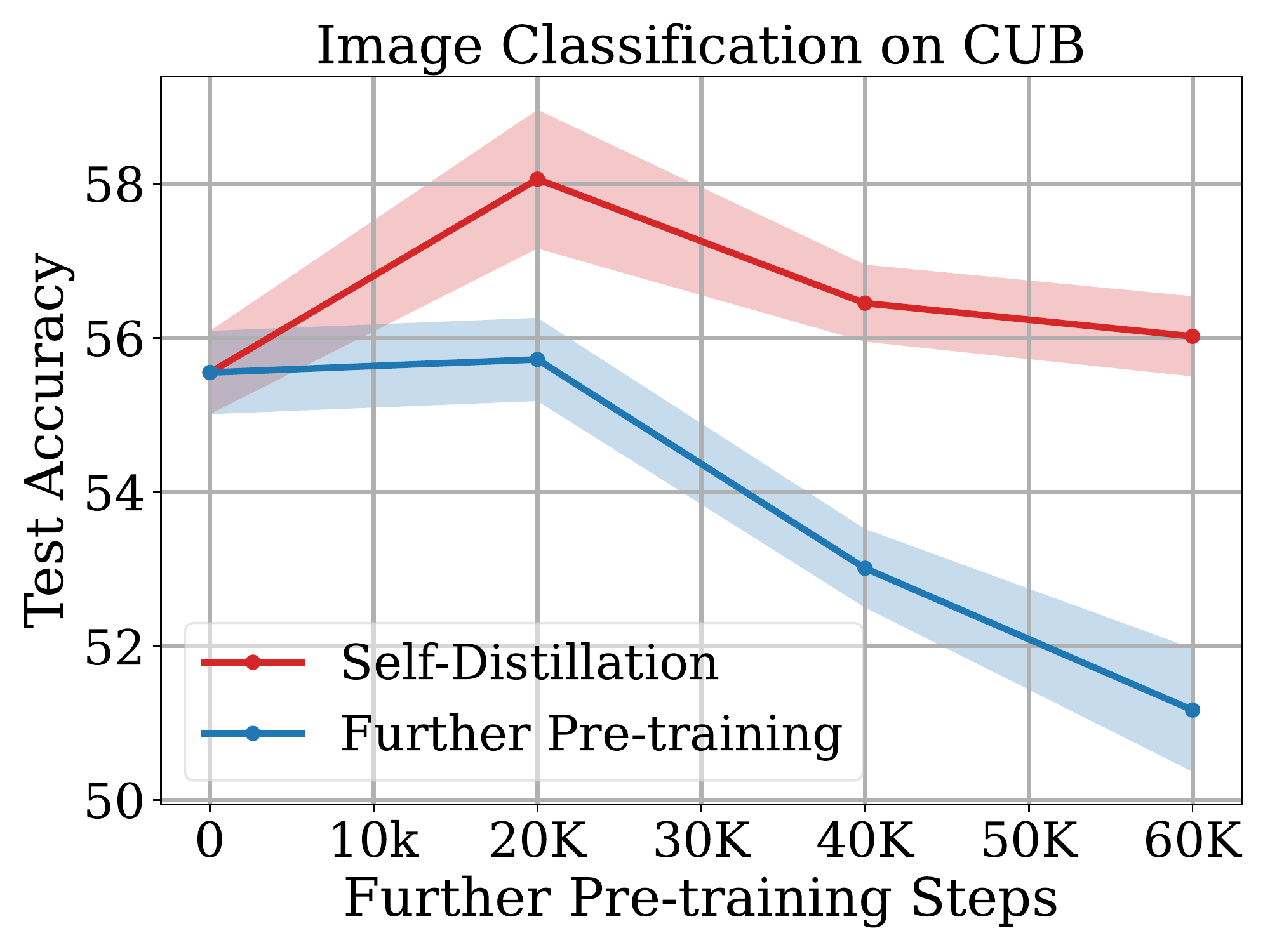}
  \vspace{-0.31in}
  \caption{\small Acc. with varying the number of further pre-training steps.}
  \vspace{-0.1in}
  \label{fig:overfitting}
\end{wrapfigure}

Several works have proposed to tackle the problem of adapting pre-trained models to a specific domain. A prevalent approach for adaptation of the pre-trained model is \textit{further pre-training} where we continue to update the parameters of the pre-trained model on additionally curated domain-specific unlabeled data with self-supervision~\citep{scibert, biobert}, before fine-tuning it on the target labeled data as depicted in Figure~\ref{fig:concept}\textcolor{Red}{b}. \cite{dont-pt} also show that further pre-training only with the target unlabeled data is still effective without any extra data. However, most of the existing further pre-training approaches have focused on language models, and we find that the further pre-training strategy is not effective for Vision Transformer (ViT)~\citep{vit}. As shown in Figure~\ref{fig:overfitting},
ViT is vulnerable to overfitting and does not generalize well to downstream tasks as when we continue to pre-train it on the target unlabeled data.

Several regularization methods~\citep{recadam, mars, r3f} have proposed to tackle the overfitting issue of large pre-trained models, however, they do not consider the adaptation process such as further pre-training. Instead, they enforce the distance between the final fine-tuned weight and the pre-trained weight to be small to promote the transfer of the knowledge acquired from pre-training to downstream tasks for better generalization. However, these regularizations hinder the adaptation of pre-trained models to downstream tasks especially when there is a significant distributional shift between the pre-trained data and target data. It eventually results in worse generalization than the simple fine-tuning strategy.

\begin{figure}
    \centering
    \includegraphics[width=0.95\linewidth]{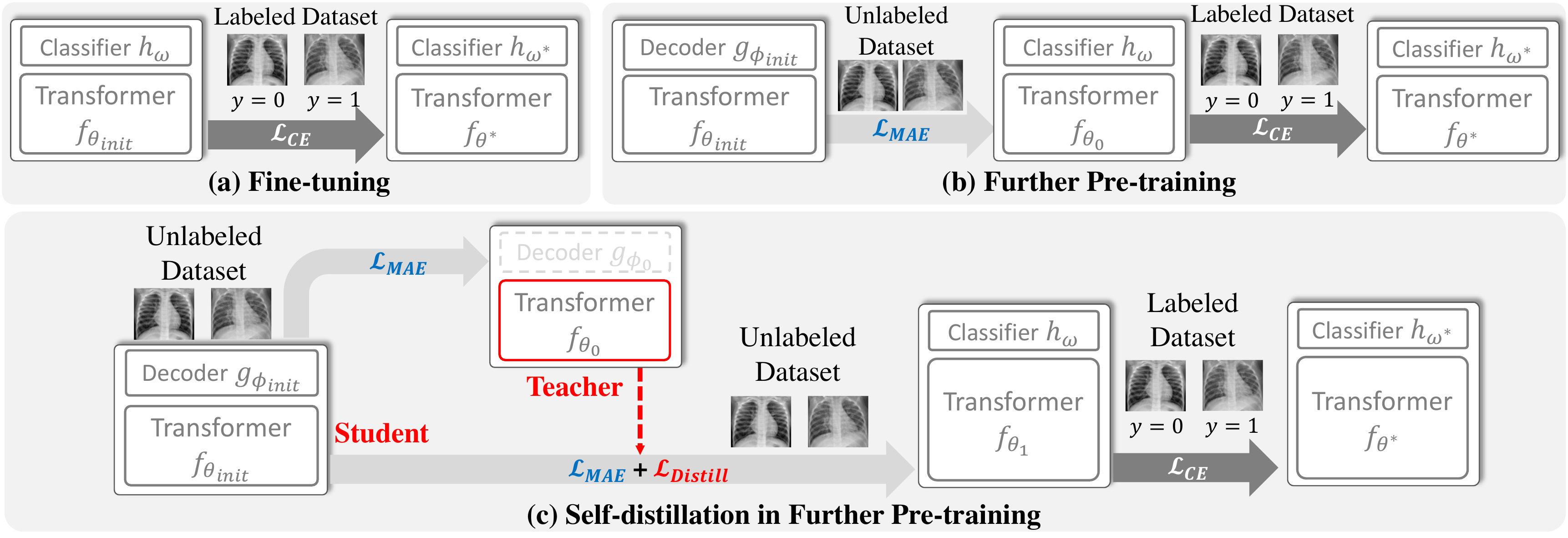}
    \vspace{-0.15in}
    \caption{\small \textbf{Concepts.} Comparison between methods adapting pre-trained transformers to the target domain. \textbf{(a)} \textbf{Fine-tuning} without any further pre-training. \textbf{(b)} \textbf{Further pre-training} and fine-tuning. \textbf{(c)} \textbf{Self-distillation} in further pre-training and fine-tuning. }
    \vspace{-0.2in}
    \label{fig:concept}
\end{figure}

To tackle these limitations, we propose \emph{self-distillation} as a regularization for further pre-training on a target unlabeled dataset so that we can effectively adapt pre-trained models to the downstream task of various domains with a limited amount of labeled data. For self-supervision, we focus on masked auto-encoding
for pre-training since it does not depend on any data augmentations, compared to other self-supervised learning methods~\citep{simclr,moco, byol, barlow-twins, simsiam, dino} which require data augmentations to construct positive pairs for self-supervised learning objective such as contrastive learning. This is especially useful when it is hard to define meaningful data augmentations for a target domain.

Specifically, we take the pre-trained model with an encoder $f_{\theta_\texttt{init}}$ and a decoder $g_{\phi_\texttt{init}}$ which are pre-trained on a massive amount of unlabeled data from general domain, and continue to pre-train it with masked auto-encoding (MAE)~\citep{bert,mae} objective on the target unlabeled data to obtain $f_{\theta_0} \text{ and } g_{\phi_0}$. After that, we set the encoder $f_{\theta_0}$ as a teacher for self-distillation. Then we take the copy of the pre-trained model $(f_{\theta_\texttt{init}}, g_{\phi_\texttt{init}})$ as a student, and match the representations of the student encoder and those of the teacher encoder while optimizing the student with the MAE on the target unlabeled data. Finally, we fine-tune the self-distilled student $f_{\theta_1}$ on the target labeled data for the downstream task. We illustrate the overview of our method in Figure~\ref{fig:concept}\textcolor{Red}{c}.

To verify the efficacy of our method, we empirically show that it significantly improves the generalization performance of a pre-trained ViT and language model RoBERTA~\citep{roberta}, and outperforms the relevant baselines on various image and text classification datasets. Moreover, we theoretically analyze the proposed method with a simplified model to understand how self-distillation for further pre-training can potentially help improve the generalization performance on the target tasks after fine-tuning.

Our contribution is threefold:
\vspace{-0.1in}
\begin{itemize}
\item We propose self-distillation for further pre-training on the target unlabeled dataset, where we enforce representations of the student to be close to those of the further pre-trained teacher while training the student with masked-auto encoding objective.

\item We theoretically analyze the proposed method with a simplified model to understand how self-distillation for further pre-training can potentially lead to better generalization performance of downstream tasks.

\item We extensively validate our method on various image and text classification datasets with pre-trained transformers and show that ours outperforms the relevant baselines.
\end{itemize}
\section{Related Work}
\vspace{-0.1in}
\paragraph{Self-Distillation} 
Knowledge distillation is to transfer knowledge of teacher to student by minimizing a divergence between output of teacher and student~\citep{kd}. When the parameterization of student and teacher is identical, we call it \emph{self-distillation} as a special case of the knowledge distillation. Although there is no new information during self-distillation process,~\citet{born-again} have shown that the student from self-distillation achieves better generalization performance than the teacher.
A similar phenomenon has been consistently observed in other works~\citep{self-distill-2, self-distill-3}. Several works propose a self-distillation without a pre-trained teacher network~\citep{self-distill-no-teacher, self-distill-no-teacher-2, self-distill-no-teacher-3}. They add auxiliarly classifiers to intermediate layers and train the classifiers to minimize divergence between the output of the classifier of the last layer and that of the auxiliary classifiers.
~\cite{self-distill-analysis} theoretically analyze how self-distillation induces regularization and reduces overfitting in Hilbert space. However, all of them focus on self-distillation for supervised learning. Instead, we empirically and theoretically show that self-distillation for further pre-training with self-supervision leads to better generalization of downstream tasks after fine-tuning the self-distilled model with target labeled data.

\vspace{-0.1in}
\paragraph{Further Pre-training}
\citet{biobert, scibert, ernie} have shown the success of continual pre-training language model on a large number of corpora collected from target domain and fine-tuning the model on target labeled dataset. However, it is computationally expensive to further pre-train the model on a large amount of unlabeled text data and it may not be feasible to collect such a large scale of unlabeled data on certain domains. Instead,~\citet{dont-pt} devise a task-adaptive pre-training where we use only target unlabeled data for further pre-training language model before fine-tuning the model on the target labeled data. To improve the effectiveness of further pre-training, \cite{nmg, maml-pt} propose learning to mask input for masked auto-encoding with bilevel optimization, which requires a prohibitive computational cost. However, all of them solely focus on pre-trained language models and we empirically find that naive further pre-training is not effective for Vision Transformers.

\vspace{-0.1in}
\paragraph{Regularization for Fine-tuning}
There are several works proposing regularization for fine-tuning a pre-trained model. \cite{recadam} propose to modify Adam~\citep{adam} optimizer, called RecAdam, which enforces the fine-tuned model close to the initial pre-trained model by minimizing $L_2$ distance between fine-tuned and initial pre-trained weight. Similarly, \cite{mars} project the fine-tuned weight for every gradient descent update such that it lies within the sphere centered on the initial pre-trained weights with the distance induced by the norm of maximum absolute row sums (MARS). Instead of explicitly minimizing the distance, motivated by trust region theory, \cite{r3f} propose to minimize symmetric KL-divergence between the model output of an original input and that of the input perturbed by Gaussian noise. However, all of them do not consider adaptation of pre-trained models to a specific target domain, which results in worse generalization performance of downstream tasks than a simple fine-tuning strategy.

\section{Method}

\subsection{Preliminaries}
\paragraph{Problem Statement}
We assume that we are given $(\theta_\texttt{init}, \phi_\texttt{init})$ parameters of the neural network $g_{\phi_\texttt{init}}\circ f_{\theta_\texttt{init}}$ which is pre-trained on a large volume of unlabeled data  with masked auto-encoding objective, where $f_{\theta_\texttt{init}}$ is an encoder which extracts hidden representation of an input and $g_{\phi_\texttt{init}}$ is an decoder reconstructing a masked input. Our goal is to fine-tune the pre-trained model $f_{\theta_\texttt{init}}$ with a randomly initialized task specific head $h_{\omega}$ on labeled dataset $\mathcal{D}^\texttt{tr}=\{(\rvx^{(i)}, y^{(i)})\}_{i=1}^n$ of a downstream classification task such that the model generalizes well to unseen test dataset $\mathcal{D}^\texttt{test}$. A typical approach to achieve this goal is empirical risk minimization  as follows:
\begin{align} \label{eq:fine-tune}
\begin{split}
& \underset{\theta, \omega}{\text{minimize}}~\mathcal{L}_{\texttt{CE}}(\theta, \omega; \mathcal{D}^{\texttt{tr}})  \ \text{ via algorithm $\Acal$ as }  
\\  & (\theta^{*}, \omega^{*}) = \mathcal{A}(\mathcal{L}_\texttt{CE} ; \theta_\texttt{init},\mathcal{D}^\texttt{tr}),
\end{split}
\end{align}
where $\mathcal{L}_\texttt{CE}$ is a cross-entropy loss and $\mathcal{A}$ denotes a stochastic gradient descent algorithm to minimize $\mathcal{L}_\texttt{CE}$ on the dataset $\mathcal{D}^\texttt{tr}$ with the initialization $\theta_\texttt{init}$. 

\paragraph{Further Pre-training}
However, the pre-trained model is prone to overfitting when it is fine-tuned on a small amount of domain-specific labeled data. \cite{dont-pt} have shown that further pre-training, where we continue to pre-train the model $g_{\phi_\texttt{init}}\circ f_{\theta_\texttt{init}}$ on the target unlabeled dataset $\mathcal{D}^u=\{\rvx^{(i)}\}_{i=1}^n$ and then fine-tune it on $\mathcal{D}^\texttt{tr}$, is effective for improving generalization performance when there is not enough domain-specific labeled data. Note that $\mathcal{D}^u$ is the exactly same as $\mathcal{D}^{\texttt{tr}}$ except that we remove the labels $y^{(i)}$. In this work, we focus on the masked auto-encoding~\citep{bert, mae} as a pre-training objective function since its generality compared to other self-supervised methods~\citep{simclr, moco, byol, moco, simsiam, dino} which require well-defined data augmentations to construct positive pairs for self-supervised learning.

\paragraph{Masked Auto-Encoding} We briefly describe the masked auto-encoding objective~\citep{roberta, mae} for a language model such as RoBERTA~\citep{roberta} and Vision Transformer (ViT)~\citep{vit}. Let $\rvx^{(i)}=(x^{(i)}_1, \ldots, x^{(i)}_K)$ be a sequence of patches for a image or tokens for a sentence with length $K$. Then we independently sample a binary mask from Bernoulli distribution with probability $\gamma$ for each $x^{(i)}_k$, denoted as $\rvz^{(i)}=(z^{(i)}_1, \ldots, z^{(i)}_K)$. If $z^{(i)}_k=1$, then $x^{(i)}_k$ is replaced with a special ``mask" token. Otherwise, we use the same $x^{(i)}_k$ for a masked input.  Let $\hat{\rvx}^{(i)}=(\hat{x}^{(i)}_1, \ldots, \hat{x}^{(i)}_K)$ be a masked input and let $f_{\theta}, g_{\phi}$ be an encoder and decoder, respectively. Then the final objective for masked auto-encoding is defined as follows:
\begin{equation}
    \mathcal{L}_{\texttt{MAE}}(\theta, \phi; \mathcal{D}^u)= \frac{1}{n}\sum_{i=1}^n\mathbb{E}_{\rvz^{(i)}\sim p_{\gamma, T}(\rvz)}\left[-\sum_{k=1}^K \frac{z^{(i)}_k}{Z^{(i)}}\cdot\log p_{\theta, \phi}(x_k^{(i)}|\hat{\rvx}^{(i)})\right], \: Z^{(i)} = \sum_{k=1}^K z^{(i)}_k,
\label{eq:mae}
\end{equation}
where $p_{\gamma,K}(\rvz)$ denotes a Binomial distribution with its parameters $\gamma$ for probability that $z_k=1$ and $K$ for the number of trials. Note that the negative log-likelihood is instantiated as cross-entropy loss for language models or mean square error for  Vision Transformers. See Appendix~\ref{appendix:mae} for more detail.

\begin{figure}[t]
\vspace{-0.2in}
\begin{minipage}[t]{0.55\textwidth}
\begin{algorithm}[H]
\small
   \caption{Self-Distillation}
   \label{algo:self-distillation}
\begin{spacing}{0.71}
\begin{algorithmic}[1]
   \REQUIRE Unlabeled dataset $\mathcal{D}^{u}$, initialization $\theta_\texttt{init}, \phi_\texttt{init}$,  learning rate $\alpha \in \mathbb{R}_{\geq 0}$, round of self-distillation $T^\prime \in \mathbb{N}_+$, masking probability $\gamma \in (0,1)$ and batch size $B$.
   \STATE $\theta_0  \leftarrow \text{Further\_Pretrain}(\mathcal{D}^u, \theta_\texttt{init}, \phi_\texttt{init}, \alpha, \gamma, B)$ \
   \FORALL{$t \leftarrow 1, \ldots, T^\prime$} 
   \STATE Initialize  $\theta_t \leftarrow \theta_\texttt{init} \text{ and } \phi_t\leftarrow \phi_\texttt{init}$
   \WHILE {not converge}
   \STATE Sample a mini-batch $\{\rvx^{(j)}\}_{j=1}^B$ from $\mathcal{D}^u$
   \FORALL{$j \leftarrow 1, \ldots, B$}
   \STATE Sample a  mask $\rvz^{(j)}\sim p_{\gamma, K}(\rvz)$
   \STATE $Z^{(j)} \leftarrow \sum_{k=1}^K z^{(j)}_k$
   \STATE Get a masked input $\hat{\rvx}^{(j)}$ with $\rvz^{(j)}$
   \STATE $\ell^1_j \leftarrow -\sum_{k=1}^K \frac{z^{(j)}_k}{Z^{(j)}}\log p_{\theta_t, \phi_t}(x^{(j)}_k | \hat{\rvx}^{(j)} )$

   \STATE $\ell^{2}_j \leftarrow \norm{f_{\theta_t}(\rvx^{(j)}) - \texttt{StopGrad}(f_{\theta_0}(\rvx^{(j)}))}_2^2$
   
   \ENDFOR
   \STATE $\mathcal{L}_1 \leftarrow \frac{1}{B}\sum_{j=1}^B \ell^1_j$, $\mathcal{L}_2 \leftarrow \frac{1}{B}\sum_{j=1}^B \ell^2_j$
   
   \STATE $\theta_t \leftarrow \theta_t -\alpha \frac{\partial(\mathcal{L}_1 + \mathcal{L}_2)}{\partial \theta}\vert_{\theta=\theta_t}$
   \STATE $\phi_t \leftarrow \phi_t -\alpha \frac{\partial\mathcal{L}_1}{\partial \phi}\vert_{\phi=\phi_t}$
   \ENDWHILE
   \STATE $\theta_0 \leftarrow \theta_t$ 
   \ENDFOR
   \RETURN $\theta_{T^\prime}$
\end{algorithmic}
\end{spacing}
\end{algorithm}
\end{minipage}
\hfill
\begin{minipage}[t]{0.44\textwidth}
\begin{algorithm}[H]
\small
   \caption{Further\_Pretrain}
  \label{algorithm2}
\begin{spacing}{0.97}
\begin{algorithmic}[1]
   \REQUIRE Unlabeled dataset $\mathcal{D}^u$, initialization $\theta_\texttt{init},\phi_\texttt{init}$, learning rate $\alpha \in \mathbb{R}_{\geq 0}$, masking probability $\gamma \in (0,1)$, and batch size $B$.
   \STATE Initialize $\theta_0 \leftarrow \theta_\texttt{init}$ and $\phi_0 \leftarrow \phi_\texttt{init}$
   \WHILE {not converge}
   \STATE Sample a mini-batch $\{\rvx^{(j)}\}_{j=1}^B$ from $\mathcal{D}^u$
   \FORALL{$j \leftarrow 1, \ldots, B$}
    \STATE Sample a  mask $\rvz^{(j)}\sim p_{\gamma, T}(\rvz)$
   \STATE $Z^{(j)} \leftarrow \sum_{k=1}^K z^{(j)}_k$
   \STATE Get a masked input $\hat{\rvx}^{(j)}$ with $\rvz^{(j)}$
    \STATE $p_k \leftarrow  p_{\theta_0, \phi_0}(x^{(j)}_k | \hat{\rvx}^{(j)} )$
  \STATE $\ell^1_j \leftarrow -\sum_{k=1}^K \frac{z^{(j)}_k}{Z^{(j)}} \log p_k$
   \ENDFOR
   \STATE $\mathcal{L} \leftarrow \frac{1}{B} \sum_{j=1}^B \ell_j^1$
   \STATE $\theta_0 \leftarrow \theta_0 - \alpha \frac{\partial\mathcal{L}}{\partial \theta}\vert_{\theta=\theta_0}$
   \STATE $\phi_0 \leftarrow \phi_0 - \alpha \frac{\partial\mathcal{L}}{\partial \phi}\vert_{\phi=\phi_0}$
   \ENDWHILE
   \RETURN $\theta_0$
    \STATE    
    \STATE
\end{algorithmic}
\end{spacing}
\end{algorithm}
\end{minipage}
\vspace{-0.25in}
\end{figure}

\subsection{Self-Distillation for Further Pre-training}
Although further pre-training strategy has been effective on text domain~\citep{dont-pt, biobert, ernie}, we empirically find that ViT with further pre-training overfits the target unlabeled data and does not generalize well to downstream image classification tasks. 
In order to tackle the issue, we propose self-distillation as a regularization for further pre-training. Specifically, given a pre-trained model $g_{\phi_\texttt{init}}\circ f_{\theta_\texttt{init}}$, we first continue to train the model on the target unlabeled data $\mathcal{D}^u$ with the masked auto-encoding objective as described in equation~\ref{eq:mae} to obtain the encoder $f_{\theta_0}$ and decoder $g_{\phi_0}$. We discard the decoder and consider the encoder $f_{\theta_0}$ as a teacher for self-distillation. Then we take the copy of the  pre-trained initial network $g_{\phi_\texttt{init}}\circ f_{\theta_\texttt{init}}$ as a student and further pre-train the student with masked auto-encoding objective but enforce hidden representation of the encoder of the student $f_{\theta_{\texttt{init}}}$ to be close to that of the teacher $f_{\theta_0}$ as follows:
\begin{equation}
\begin{gathered}
    (\theta_1, \phi_1) \in \argmin_{\theta, \phi} \left(\mathcal{L}_\texttt{MAE}(\theta, \phi;\mathcal{D}^u) + \mathcal{L}_\texttt{Distill} (\theta; \theta_0, \mathcal{D}^u )\right)\\
    \mathcal{L}_\texttt{Distill}\left(\theta; \theta_0, \mathcal{D}^u\right) = \frac{1}{n}\sum_{i=1}^n \norm{f_{\theta}(\rvx^{(i)})-\texttt{StopGrad}\left(f_{\theta_0}(\rvx^{(i)})\right)}_2^2
\end{gathered}
\label{eq:self-distill}
\end{equation}
where $\theta$ and $\phi$ are initialized with the pre-trained parameters $\theta_\texttt{init}$ and $\phi_\texttt{init}$, respectively and $\texttt{StopGrad}$ denotes the stop-gradient operation which does not back-propagate through the input. As described in Algorithm~\ref{algo:self-distillation},  we can repeat this process to perform multiple rounds of self-distillation $(T^\prime >1)$ where the student of the previous round becomes a teacher and a new student is initialized with the pre-trained weights $\theta_\texttt{init}$ and $\phi_\texttt{init}$ for the next round. We empirically find that the first round of self-distillation plays the most significant role in improving the final generalization performance of downstream tasks. Furthermore, theoretical analysis shows that the first round of self-distillation has the largest impact on regularization. Thus, we perform a single round of self-distillation for computational efficiency. After self-distillation, we discard the decoder $g_{\phi_1}$ and fine-tune the encoder of the student $f_{\theta_1}$ along with a randomly initialized task-specific head $h_\omega$ by minimizing $\mathcal{L}_\texttt{CE}(\theta,\omega, ; \mathcal{D}^\texttt{tr})$ with the initialization $\theta_1$ as described in equation~\ref{eq:fine-tune}.
\section{Theoretical Analysis}
In this section, we analyze how self-distillation affects the final model after fine-tuning in terms of generalization and regularization. This section proves a generalization bound on  the supervised loss  for our method and shows  that the generalization bound  strictly decreases as  the number of self-distillation increases. Moreover, we show that self-distillation acts as a regularizer on the distance between the initial weight before further pre-training and the final weight after fine-tuning. The regularization effect is shown to have the largest impact in the first round of self-distillation, which suggests that the first round of self-distillation plays a more significant role in the final performance when compared to the other rounds.  

We consider the dynamics of the weight vector $w_{t,\tau}$  over   time $\tau$ of fine-tuning after $t$ rounds of self-distillation, where $w_{0,0}$ is the result of further pre-training,
and $w_{t,0} \in \mini_{w} L_{t}(w)$ is the result of the self-distillation of  $t$ rounds with  $L_{t}(w) = \frac{1}{n}\sum_{i=1}^n \|f(x_{i},w)-f(x_i,w_{t-1,0})\|^{2}_2+\lambda \|w\|^{2}_2$ for some $\lambda>0$. After  $t$ rounds of self-distillation, we consider the dynamics over  fine-tuning time $\tau$ via gradient flow \citep{saxe2013exact,kawaguchi2021theory}: 
$
\frac{d w_{t,\tau}}{d \tau} = -  \nabla \Lcal(w_{t,\tau}),
$
with the initialization $w_{t,0}$ obtained by the self-distillation  where $\Lcal(w)=\frac{1}{2}\sum_{i=1}^n \ell(w,x_{i},y_{i})$ with $\ell(w,x,y)=\|f(x,w)-y\|^{2}_2$ and  $y \in \RR^p$. Here, the self-distillation and fine-tuning share a same training dataset $s=\{(x_i,y_i)\}_{i=1}^n$. In this section, to obtain theoretical insights, we consider the  regime of $d> n$ and a simple abstract model, $f(x,w)= W\varphi(x) \in \RR^p$, with some nonlinear  map $\varphi$ and the weight matrix   $W\in \RR^{p \times d}$  where  $w=\vect[W\T] \in \RR^{dp}$ and  $\varphi(x) \in \RR^{d}$. Here, $\vect[W\T]$ is a vectorization of the matrix $W\T$. Let us fix the fine-tuning time length $T$ as  $1<\tau \le T<\infty$. Since  $d>n$, there are infinitely many solutions to the problem of  minimizing $\Lcal(w)$.\  Thus, each of  the finite length $T$ and the over-parameterization $d>n$ implies that the initialization $w_{t,0}$ at the fine-tuning phase via self-distillation plays an  important role. 

Let $\delta > 0$ and $t\in \NN_0$. We then define $
\Fcal_{t}= \{\Acal_{t}(s) :s \in \Scal  \},
$                  
where  $\Scal$ is  a set of all training datasets of size $n$ such that with probability at least $1-\delta$, the training dataset $s$ is in $\Scal$. For each training dataset $s \in \Scal$,  $\Acal_{t}(s)=w_{t,T}$  is the final weight vector of the model after  $t$ rounds of self-distillation and $T$ time of fine-tuning.
Let us define the matrix $\Phi \in \RR^{d \times n}$ by $\Phi_{ij}=\varphi(x_{j})_{i}$. We assume that $\Phi$ is of full rank; i.e., $\rank(\Phi)=n$ since $d \ge n$. This is typically satisfied because if $\rank(\Phi)<n$, there is some redundancy in the rows of the matrix $\Phi$. 
Denote by $[I_p \otimes \Phi] \in \RR^{dp \times np}$ the Kronecker product of the identity matrix $I_p \in \RR^{p \times p}$ and the matrix $\Phi$. 
We write its singular value decomposition by $[I_p \otimes \Phi]=U \Sigma V\T$ where $U=[u_1,u_{2}\dots,u_{dp}] \in \RR^{dp\times dp}$ contains the left-singular vectors $u_i \in \RR^{dp}$ for $i \in \{1,\dots, dp\}$ and $\Sigma \in \RR^{dp \times np}$ is a rectangular diagonal matrix with $\Sigma_{ii}=\sigma_i \in \RR_{\geq 0}$ for $i \in \{1,\dots, np\}$ and $\sigma_1 \ge \sigma_2\ge \dots \ge \sigma_{np}\geq 0$. Define $M$ to be an upper bound on the loss as $\ell_{}(w,x,y) \le M$. Define  $R$ to be an upper bound on the expected norm of the features as   $\EE_{x}\|\varphi(x)\|_2 \leq R$.
We assume that $w_{0,0}  \neq 0$; if $w_{0,0}=0$, then the target function in the self-distillation phase is always zero as $f(x_i,w_{0,0})=0$ for all $i$, which is  unlikely the case in practice. We define $\wi\in \RR^{dp}$ to be the  weight before further pre-training and define $Y = \vect[[y_1 ,\dots, y_n]\T] \in \RR^{np}$. 

The following theorem shows that the generalization bound on the supervised loss $\ell_{ }(w_{t,T},x,y)$ of the fine-tuning phase  strictly decreases as we increase  the number $t$ of self-distillation rounds in the further pre-training phase:

\begin{theorem} \label{thm:1}
There exists a constant $c$ (that only depends on $M$) such that with probability at least $1-\delta$, the following holds: 
\begin{align} \label{eq:5}
\EE_{x,y}[\ell(w_{t,T},x,y)]
\le  \frac{1}{n}\sum_{i=1}^{n} \ell_{ }(w_{t,T},x_{i},y_{i})+\zeta(t) \sqrt{\frac{4c^2 R^2p}{n}}+M \sqrt{\frac{\ln(2/\delta)}{2n}}, 
\end{align}
where the function $\zeta(t)$ is strictly decreasing in $t \in \NN_0$.
\end{theorem}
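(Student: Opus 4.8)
The plan is to derive a Rademacher-complexity generalization bound over the algorithm-induced hypothesis class $\Fcal_t$, and then to make the complexity term explicit by solving the self-distillation and fine-tuning dynamics in closed form, so that the resulting norm radius $\zeta(t)$ can be exhibited as strictly decreasing in $t$. Conditioning on the event $s\in\Scal$ (so that $\Fcal_t$ is a fixed, data-independent class), I would first apply McDiarmid's bounded-differences inequality to the supremum $\sup_{w\in\Fcal_t}\bigl(\EE_{x,y}[\ell(w,x,y)]-\tfrac1n\sum_{i=1}^n\ell(w,x_i,y_i)\bigr)$; since $0\le\ell\le M$, replacing one sample perturbs this quantity by at most $M/n$, which produces the additive term $M\sqrt{\ln(2/\delta)/(2n)}$ and leaves the expected supremum to be bounded by $2\,\mathfrak{R}_n(\ell\circ\Fcal_t)$ via standard symmetrization.

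Next I would strip off the loss and the linear layer. On $\{\ell\le M\}$ the map $z\mapsto\|z-y\|_2^2$ is $2\sqrt M$-Lipschitz, so a vector-valued contraction inequality gives $\mathfrak{R}_n(\ell\circ\Fcal_t)\le c_0\sqrt M\,\mathfrak{R}_n(\Fcal_t)$ for a universal $c_0$; and since $f(x,w)=W\varphi(x)$ is linear in the feature map, with the members of $\Fcal_t$ having weight matrices $\|W\|_F\le\zeta(t)$, one gets $\mathfrak{R}_n(\Fcal_t)\le\zeta(t)R\sqrt{p}/\sqrt n$ from $\EE_x\|\varphi(x)\|_2\le R$ and the $p$ output coordinates. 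Folding $c_0$ and the numerical constants into a single $c=c(M)$ yields exactly $2\,\mathfrak{R}_n(\ell\circ\Fcal_t)\le\zeta(t)\sqrt{4c^2R^2p/n}$. It remains to choose $\zeta(t)$ so that it both upper-bounds $\|W_{t,T}\|_F$ and is strictly decreasing in $t$.

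For this I would compute $w_{t,T}$ explicitly. Writing $f(x_i,w)=(I_p\otimes\varphi(x_i)\T)w$ and stacking over $i$, the model outputs are $[I_p\otimes\Phi]\T w$ with SVD $[I_p\otimes\Phi]=U\Sigma V\T$. The self-distillation objective $L_t(w)=\tfrac1n\|[I_p\otimes\Phi]\T(w-w_{t-1,0})\|_2^2+\lambda\|w\|_2^2$ is strongly convex, so its unique minimizer is $w_{t,0}=A\,w_{t-1,0}$ with $A=(\tfrac1n[I_p\otimes\Phi][I_p\otimes\Phi]\T+\lambda I)^{-1}\tfrac1n[I_p\otimes\Phi][I_p\otimes\Phi]\T=\sum_{i=1}^{np}a_i u_i u_i\T$ and $a_i=\tfrac{\sigma_i^2/n}{\sigma_i^2/n+\lambda}\in(0,1)$; hence $w_{t,0}=\sum_{i=1}^{np}a_i^{\,t}\langle u_i,w_{0,0}\rangle u_i$ for $t\ge1$ and $w_{0,0}$ for $t=0$. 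Fine-tuning by gradient flow on $\Lcal(w)=\tfrac12\|[I_p\otimes\Phi]\T w-Y\|_2^2$ is a linear ODE solved coordinatewise in the basis $\{u_i\}$: for $i\le np$, $\langle u_i,w_{t,T}\rangle=(1-e^{-\sigma_i^2 T})\tfrac{\langle v_i,Y\rangle}{\sigma_i}+e^{-\sigma_i^2 T}a_i^{\,t}\langle u_i,w_{0,0}\rangle$, while for $i>np$ the component is frozen at $\langle u_i,w_{t,0}\rangle$, which equals $\langle u_i,w_{0,0}\rangle$ when $t=0$ and $0$ once $t\ge1$. Decomposing $w_{t,T}=\beta+\widehat w_t$ where $\beta:=\sum_{i\le np}(1-e^{-\sigma_i^2 T})\tfrac{\langle v_i,Y\rangle}{\sigma_i}u_i$ is independent of $t$ and $\widehat w_t$ carries the remaining ``initialization-memory'' terms, the triangle inequality gives $\|w_{t,T}\|_2\le\|\beta\|_2+\|\widehat w_t\|_2$, so I would set $\zeta(t):=B+\big(\mathbf{1}[t=0]\,\|P_\perp w_{0,0}\|_2^2+\sum_{i\le np}e^{-2\sigma_i^2 T}a_i^{\,2t}\langle u_i,w_{0,0}\rangle^2\big)^{1/2}$, with $B$ a bound on $\|\beta\|_2$ (from $\sigma_{np}>0$ and a bound on $\|Y\|_2$ in terms of $M,n$) and $P_\perp$ the projection onto $\mathrm{span}(u_{np+1},\dots,u_{dp})$. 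Since every $a_i\in(0,1)$ and the $t=0$ term additionally contributes the nonnegative $\|P_\perp w_{0,0}\|_2^2$, the quantity under the square root strictly decreases in $t\in\NN_0$; here the non-degeneracy $[I_p\otimes\Phi]\T w_{0,0}\neq 0$ --- equivalently $f(x_i,w_{0,0})\neq 0$ for some $i$, the generic case singled out in the paper's remark --- is what makes the decrease strict also for the steps $t\ge1$.

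The step I expect to be the crux is precisely this choice of $\zeta(t)$: the obvious candidate $\sup_{s\in\Scal}\|W_{t,T}(s)\|_F$ is \emph{not} monotone in $t$, because $\|w_{t,T}\|_2^2=\|\beta\|_2^2+2\langle\beta,\widehat w_t\rangle+\|\widehat w_t\|_2^2$ carries a cross-term of indefinite sign that can make the norm grow before it shrinks; the triangle-inequality (equivalently Cauchy--Schwarz) split isolating the geometrically decaying component $\widehat w_t$, together with the $[I_p\otimes\Phi]\T w_{0,0}\neq 0$ hypothesis and a separate treatment of the transition from $t=0$ to $t=1$ (the kernel part $P_\perp w_{0,0}$ being annihilated by the first distillation round but never by fine-tuning), is exactly what forces strict monotonicity. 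A secondary bookkeeping point is to ensure the data-dependent spectral quantities entering $\zeta(t)$ are controlled uniformly over $\Scal$ --- or, as appears intended here, treated as fixed once one conditions on $\Scal$ --- so that $\zeta(t)$ is a legitimate deterministic complexity radius.
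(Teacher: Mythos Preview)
Your proposal is correct and follows essentially the same route as the paper: a Rademacher-complexity bound with vector-valued contraction reduces matters to controlling $\sup_{w\in\Fcal_t}\|w\|_2$, and the explicit SVD-diagonalized solutions of both the self-distillation recursion (the paper's Lemma~1) and the fine-tuning gradient flow yield the decomposition $w_{t,T}=\beta+\widehat w_t$ whose $t$-dependent part decays geometrically, with the null-space component $P_\perp w_{0,0}$ killed exactly at the first distillation round. The only cosmetic difference is that the paper packages the radius as a single square root $\zeta_t(s)=\sqrt{\|\beta\|_2^2+\|\widehat w_t\|_2^2}$ and then sets $\zeta(t)=\sup_{s\in\Scal}\zeta_t(s)$, whereas you use the triangle-inequality split $B+\|\widehat w_t\|_2$; both produce a strictly decreasing upper bound on $\|w_{t,T}\|_2$, and your version actually sidesteps the cross-term issue you flagged more cleanly than the paper's inequality~(\ref{eq:1}) does.
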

 
The proofs of all  results in this section are presented in Appendix \ref{app:1}.
Moreover,  the following theorem shows that  the tight upper bound on  the distance between the initial weight $\wi$ and  the final weight $w_{t,T}$ after $T$ steps of fine-tuning (i.e., $\|\wi -  w_{t,T}\|_2$) strictly decreases as the number $t$ of self-distillation rounds increases:
\begin{theorem} \label{prop:2}
There is a function $\psi:\NN_0 \to \mathbb{R}_{\geq 0}$ such that (1) $\|\wi -  w_{t,T}\|_2 = \psi(t)$ for some $\wi\in \RR^{dp}$, (2) $\|\wi -  w_{t,T}\|_2 \le \psi(t)$ for all $\wi \in \RR^{dp}$,  (3) the function $\psi(t)$ is strictly decreasing in $t \in \NN_0$,
(4) the function $\psi(t)$ can be decomposed to $\psi(t)= \sqrt{G_{1}+\psi_1(t)+\one\{t=0\}\Bcal}+G_2$ with constants $G_{1},G_2\ge 0$ in $t$ where $\psi_1(t)$ is strictly decreasing in $t\in\NN_0$ and $\Bcal=\sum_{i=np+1}^{dp}(u_{i}\T w_{0,0})^2 \ge 0$. 
\end{theorem}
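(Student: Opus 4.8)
The plan is to solve the gradient-flow dynamics in closed form, exploit the fact that overparameterized least-squares gradient flow converges (in the fine-tuning phase) to the minimum-norm interpolant relative to the initialization $w_{t,0}$, and then track how the self-distillation recursion $w_{t,0}\in\argmin_w L_t(w)$ propagates through the Kronecker/SVD structure. Concretely, I would first note that since $f(x,w)=W\varphi(x)$ is linear in $w$, both the self-distillation objective $L_t$ (ridge-regularized least squares with targets $f(x_i,w_{t-1,0})$) and the fine-tuning gradient flow have explicit solutions. The self-distillation step is a linear map: $w_{t,0}=P\,w_{t-1,0}$ for a fixed symmetric PSD contraction $P$ built from $\Phi$ and $\lambda$ (in the $I_p\otimes\Phi$ coordinates, $P$ is diagonalized by $U$ with eigenvalues $\sigma_i^2/(\sigma_i^2+\lambda)\in(0,1)$ on the first $np$ directions and eigenvalue $0$ on the remaining $dp-np$ directions — the null space of $\Phi$, since the ridge term kills components with no data support but the least-squares term does not see them). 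Hence $w_{t,0}=P^t w_{0,0}$, and the component of $w_{t,0}$ along each left-singular vector $u_i$ with $i\le np$ decays geometrically like $(\sigma_i^2/(\sigma_i^2+\lambda))^t$, while all components with $i>np$ are annihilated already at $t=1$ — this is the source of the indicator term $\one\{t=0\}\Bcal$ with $\Bcal=\sum_{i=np+1}^{dp}(u_i\T w_{0,0})^2$.

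Next I would handle the fine-tuning phase. Gradient flow on $\Lcal(w)=\frac12\|[I_p\otimes\Phi]\T w - Y\|_2^2$ (after vectorizing) starting from $w_{t,0}$ converges as $\tau\to\infty$, and at any finite time $T$ it has the explicit form $w_{t,T}=w_{t,0}+[I_p\otimes\Phi](\ldots)$ where the correction lies entirely in the row space of $[I_p\otimes\Phi]\T$, i.e. in $\mathrm{span}(u_1,\dots,u_{np})$. Therefore the components of $w_{t,T}$ along $u_i$ for $i>np$ equal exactly those of $w_{t,0}$, which are $u_i\T w_{0,0}$ when $t=0$ and $0$ when $t\ge1$. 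The components along $u_i$ for $i\le np$ are determined by $Y$, the $\sigma_i$, and $T$, plus a term proportional to the (decaying) initialization component $(P^t w_{0,0})$'s projection — the standard "gradient flow interpolates in the data directions but retains a vanishing transient from the initialization." Writing $w_{t,T}$ in the $U$-basis and then computing $\|\wi-w_{t,T}\|_2^2$, the cross terms and data-direction terms assemble into $G_1+\psi_1(t)$ (with $\psi_1(t)\to$ something strictly decreasing because each surviving initialization component $(\sigma_i^2/(\sigma_i^2+\lambda))^t$ is strictly decreasing and enters the squared distance with a sign that makes the whole thing monotone — here I'd need the transient coefficient at finite $T$ to be nonzero, which follows from $1<T<\infty$), the null-space part contributes the constant-plus-indicator $\one\{t=0\}\Bcal$, and any part of $\wi-w_{t,T}$ that is genuinely $t$-independent is pulled out as $G_2$ after taking the square root. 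Claims (1) and (2) come from choosing $\wi$ to be exactly the $t$-independent "offset" that makes the bound tight versus an arbitrary $\wi$; claim (3) is the monotonicity of $\psi_1$ together with $\one\{t=0\}\Bcal$ only helping for $t=0$; claim (4) is just the bookkeeping of this decomposition, reading off $G_1,G_2,\psi_1,\Bcal$.

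The main obstacle I anticipate is controlling the finite-time transient in the fine-tuning phase precisely enough to get \emph{strict} monotonicity in $t$ and the \emph{exact} decomposition in (4), rather than just an asymptotic ($T\to\infty$) statement. At finite $T$ the gradient-flow solution carries a factor like $e^{-\sigma_i^2 T}$ multiplying the projection of the initialization onto $u_i$, so $w_{t,T}$'s $u_i$-component (for $i\le np$) is a convex-combination-like mixture of the $Y$-determined interpolant and $(P^tw_{0,0})$'s $u_i$-component; showing the resulting $\|\wi-w_{t,T}\|_2$ depends on $t$ only through a strictly decreasing $\psi_1(t)$ plus the $t=0$ null-space bonus requires carefully expanding the square and checking that no sign cancellation destroys monotonicity — this is where the hypotheses $w_{0,0}\neq0$, $\mathrm{rank}(\Phi)=n$, and $1<T<\infty$ all get used. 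A secondary technical point is justifying that $\Fcal_t$ and the Rademacher-complexity machinery behind Theorem~\ref{thm:1} are not needed here: Theorem~\ref{prop:2} is a purely deterministic statement about the optimization trajectory, so the proof is linear algebra plus the ODE solution, with no probabilistic ingredient. Once the closed forms are in hand, defining $\psi$ and verifying (1)--(4) is routine.
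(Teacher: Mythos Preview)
Your derivation of the closed-form dynamics is essentially the paper's: self-distillation acts as a contraction diagonalized by $U$ with eigenvalues $\sigma_i^2/(\sigma_i^2+n\lambda)$ on the first $np$ directions (note the factor $n$, coming from the $1/n$ in $L_t$) and eigenvalue $0$ on the null space, and the gradient-flow solution gives $w_{t,T}$ with $u_i$-component $q_i(1-e^{-\sigma_i^2 T})+c_i^{(t,0)}e^{-\sigma_i^2 T}$ for $i\le np$ and $\one\{t=0\}(u_i\T w_{0,0})$ for $i>np$. This matches the paper's Lemma~\ref{lemma:1} and the explicit formula for $w_{t,T}$ obtained in the proof of Theorem~\ref{thm:1}.

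The gap is in how you construct $\psi$. You propose to expand $\|\wi-w_{t,T}\|_2^2$ directly and reassemble the pieces into the form $\sqrt{G_1+\psi_1(t)+\one\{t=0\}\Bcal}+G_2$. But the cross term $-2\,\wi\T w_{t,T}$ then carries $t$-dependence whose sign you do not control, and indeed the raw distance $\|\wi-w_{t,T}\|_2$ is \emph{not} monotone in $t$ for every $\wi$ (take $\wi$ close to $w_{0,T}$ but far from $w_{1,T}$); so the worry you flag about ``sign cancellation'' is fatal to this route, not merely a technicality. The paper avoids the issue entirely by setting $\psi(t)\coloneqq\|w_{t,T}\|_2+\|\wi\|_2$, the triangle-inequality upper bound. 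Condition~(2) is then immediate; condition~(1) is attained by choosing $\wi$ antiparallel to $w_{t,T}$ (so the tightening $\wi$ is $t$-\emph{dependent}, not the ``$t$-independent offset'' you anticipated); and the decomposition~(4) comes from reading $\|w_{t,T}\|_2^2$ off the orthonormal $U$-expansion, with $G_2=\|\wi\|_2$ sitting outside the square root. Strict monotonicity of $\psi_1(t)=\sum_{i\le np}(c_i^{(t,0)})^2e^{-2\sigma_i^2 T}$ is then just the geometric decay of each $(c_i^{(t,0)})^2$, and no cross-term sign argument is needed.
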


Theorem~\ref{prop:2} shows that the self-distillation acts as a regularizer on the distance between the initial weight $\wi$ and the final weight $w_{t,T}$. Since the Rademacher complexity of a set of vectors is invariant to a shift by a constant vector, this distance has been shown to control the generalization bound in previous papers in various models and settings, including deep neural networks \citep{bartlett2002rademacher,bartlett2017spectrally,nagarajan2019generalization}. This suggests that self-distillation helps generalization via a regularization effect on  the  distance. Moreover, the first round of self-distillation is expected to have the largest impact based on Theorem \ref{prop:2} since Theorem \ref{prop:2} shows that we can completely remove the unnecessary component $\Bcal$ of $w_{0,0}$ in the first  round of self-distillation. We have verified  these theoretical predictions in the experiments where we show the correlation between the improvement via self-distillation and the distance that appeared in the generalization bound in the previous paper~\citep{nagarajan2019generalization}. 

\section{Experiment}
\label{sec:exp} 
\vspace{-0.1in}
\paragraph{Dataset}
For image classification problem, we use  six datasets --- FGVC Aircraft (Aircraft)~\citep{aircraft-dataset}, Caltech UCSD Birds 200 (CUB)~\citep{cub-dataset}, Chest X-ray~\citep{chest-xray-dataset}, Describable Textures Dataset (DTD)~\citep{dtd-dataset}, Stanford Dogs~\citep{stanford-dogs}, and Oxford 102 Flower~\citep{flower-dataset}. For text classification problem, we use four datasets --- Chemprot~\citep{chemprot}, ACL-ARC~\citep{acl-arc}, SCIERC~\citep{scierc}, and Twitter-Emotion~\citep{twitter-emotion}. Please see Appendix~\ref{appendix:dataset} for more detail.

\vspace{-0.1in}
\paragraph{Implementation Detail}
For the image classification problem, we use Vision Transformer pre-trained on unlabeled ImageNet dataset with masked auto-encoding~\citep{mae} and fine-tune it on the downstream task with AdamW optimizer~\citep{adamw} for 10,000 steps with batch size 32. Regarding further pre-training and self-distillation, we continue to pre-train the model for 20,000 steps with batch size 64. We evaluate the Vision Transformers with accuracy. For text classification, following the experimental setup from~\cite{dont-pt}, we use RoBERTA~\citep{roberta} as a backbone network and fine-tune it on the target labeled dataset with AdamW optimizer for 10 epochs with batch size 32. In terms of further pre-training  and self-distillation, we further pre-train RoBERTA for 100 epochs with batch size 128. We evaluate the models with macro F1 for SCIERC, ACL-ARC, and Twitter-Emotion dataset, and micro F1 for Chemprot dataset. 

\vspace{-0.15in}
\paragraph{Baselines}
We compare our method against the following baselines targeting for fine-tuning pre-trained models. All the models are initialized with the pre-trained weights $\theta_\texttt{init}$ and $\phi_\texttt{init}$.
\vspace{-0.1in}
\begin{enumerate}[itemsep=1.0mm, parsep=0pt, leftmargin=*]

\item \textbf{Fine-tuning}:  The model fine-tuned on target labeled dataset $\mathcal{D}^\texttt{tr}$ without any further pre-training or regularization except dropout and weight decay.

\item \textbf{RecAdam~\citep{recadam}}: The model trained with RecAdam optimizer which is a variant of Adam optimizer~\citep{adam} and additionally  penalizes $L_2$ distance between the fine-tuned and the initial pre-trained weight.

\item \textbf{MARS~\citep{mars}}: The model trained to minimize the cross-entropy loss along with the regularization projecting the fine-tuned weight to  lie within a sphere centered on the initial pre-trained weights. For each layer, the distance induced by Maximum Absolute Row Sum (MARS) matrix norm $(\max_j \sum_{i=1}\lvert W_{j,i}-U_{j,i}\rvert)$ is used for the regularization.

\item \textbf{R3F~\citep{r3f}}: The model trained to minimize the cross-entropy loss as well as symmetric KL-divergence between softmax output of the original input and that of the input perturbed by Gaussian noise.

\item \textbf{Further Pre-training~\citep{dont-pt}}: Task adaptive pre-training where we further pre-train the  model on the unlabeled target dataset $\mathcal{D}^u$ with masked auto-encoding objective and fine-tune it on the target labeled dataset $\mathcal{D}^\texttt{tr}$.

\item \textbf{Self-Distillation}: This is our model which is further pre-trained on unlabeled target dataset $\mathcal{D}^u$ with~\eqref{eq:self-distill} and fine-tuned on the target labeled dataset $\mathcal{D}^\texttt{tr}$.
\end{enumerate}

\begin{table}[t]
\centering
\small
\caption{Average and standard deviation of accuracy with 5 runs  for image classification datasets.}
\vspace{-0.15in}
\resizebox{0.99\textwidth}{!}{
\begin{tabular}{lcccccc}
\toprule
{Method} &  Aircraft & CUB & Chest X-ray & DTD & Dogs & Flower  \\ 
\midrule
{Fine-tuning}& $72.33\pm 1.13$& $55.55\pm0.54$  &$77.15\pm0.52$ & $67.56\pm0.52$  & $62.53\pm0.57$ & $88.78\pm0.65$\\
{RecAdam} & $70.76\pm 1.25$ & $55.22\pm 1.29$ &  $77.29\pm 1.32$ & $67.59\pm1.03$ & $61.65\pm0.92$ & $88.97\pm0.44$\\
{MARS} & $72.74\pm 0.57$ & $55.35\pm0.73$ & $77.28\pm1.80$ & $66.79\pm 0.90$ & $62.24\pm0.96$ & $87.93\pm 1.21$\\
{R3F} & $72.95\pm0.46$ & $55.91\pm0.79$ &$76.86\pm0.97$ & $65.32\pm1.25$ & $62.15\pm0.48$ & $88.92\pm0.78$ \\
{Further Pre-training} & $73.38 \pm 0.64$ & $55.72\pm0.46$ & $77.79\pm2.06$ & $65.55\pm 1.12$ & $62.34\pm 0.39$ & $88.63\pm0.35$\\
\midrule
\textbf{Self-Distillation} & $\textbf{74.37}\pm\textbf{0.43}$ & $\textbf{58.06}\pm\textbf{0.90}$ &  $\textbf{79.68}\pm\textbf{1.05}$ & $\textbf{68.51}\pm\textbf{0.51}$ & $\textbf{63.55}\pm \textbf{0.39}$ & $\textbf{90.28}\pm\textbf{0.44}$\\
\bottomrule
\end{tabular}
}
\label{tab:img_exp}
\vspace{-0.1in}
\end{table}

\subsection{Main Results}
As shown in Table~\ref{tab:img_exp}, self-distillation consistently outperforms all the regularization methods and the further pre-training method on image datasets. Notably, our method significantly improves the performance of the Chest X-ray dataset consisting of grey-scaled images for diagnosis of pneumonia. In addition, self-distillation effectively tackles the Flower dataset which contains only 2,040 labeled examples. In contrast, the other baselines do not show consistent improvement across all the image datasets. For instance, further pre-training is effective for the Aircraft dataset, but significantly degrades the test accuracy on the DTD dataset.  Regularization methods such as RecAdam, MARS, and R3F barely improve generalization performance on most datasets or underperform the simple fine-tuning strategy on certain datasets. This empirical evidence supports that the regularizations enforcing the fine-tuned models close to the initial pre-trained weight are not effective for adapting a pre-trained model to the target datasets of specific domains. 

\begin{table}[t]
    \centering
    \small
    \caption{Average and standard deviation of F1 score with 5 runs for text classification datasets.}
    \vspace{-0.15in}
    \begin{tabular}{lcccc}
    \toprule
    {Method} &  SCIERC & ACL-ARC & Chemprot & Twitter-Emotion\\
    \midrule
    Fine-tuning & $76.63\pm2.06$ & $64.09\pm4.13$ & $80.59\pm 1.15$ & $77.61\pm0.83$\\
    RecAdam & $79.45\pm1.92$ & $59.70\pm2.68$ & $\textbf{82.73}\pm\textbf{0.28}$ & $78.26\pm0.88$\\
    MARS & $74.69\pm 1.25$ & $53.57\pm6.64$ & $80.18\pm0.92$ & $77.48\pm1.69$\\
    R3F & $75.61\pm2.49$ & $60.13\pm2.55$ & $79.25\pm2.16$ & $77.79\pm0.81$\\
    Further Pre-training & $80.32 \pm1.25$ &$69.73\pm2.40$ & $82.33\pm 0.46$ & $78.71\pm0.40$\\ 
    \midrule    
    \textbf{Self-Distillation} & $\textbf{81.79} \pm\textbf{0.75}$ & $\textbf{73.17}\pm\textbf{2.19}$ & $\textbf{82.87}\pm\textbf{0.35}$ & $\textbf{79.77}\pm\textbf{0.79}$\\
    \bottomrule
    \end{tabular}
    \label{tab:text_exp}
\vspace{-0.15in}
\end{table}

Furthermore, as shown in Table~\ref{tab:text_exp}, we provide additional experimental results for text classification tasks. Again, self-distillation significantly outperforms all of the baselines across all four datasets, except RecAdam in the Chemprot dataset. In contrast to the previous experiment, the further pre-training method  improves the test F1 score of the simple fine-tuning method, yet it still underperforms our model. For regularization methods --- RecAdam, MARS, and R3F, they do not achieve consistent improvement across all three datasets. RecAdam  moderately  improves the F1 score on the SCIERC and Chemprot dataset but significantly degrades the generalization performance on ACL-ARC dataset. Both MARS and R3F show poor performance on SCIERC and ACL-ARC datasets, and their performance slightly is worse than Fine-tuning method on the Chemprot dataset.

\vspace{-0.12in}
\paragraph{Result for Low Resource Data}
We further perform experiments to show how self-distillation effectively handles low resources of labeled data. Given a full CIFAR-100 dataset~\citep{cifar} which contains 50,000 training pairs of an image and corresponding label, we plot the test accuracy of each model by varying the number of training instances. Note that we also reduce the number of unlabeled images used for further pre-training or self-distillation. As shown in Figure~\ref{fig:low-resource}, self-distillation consistently improves the generalization performance of both fine-tuning  method and the model which is further pre-trained on the images from the CIFAR-100 dataset. Notably, the gain by self-distillation becomes larger when the models are trained with an extremely small number of instances. For example, self-distillation achieves $13\%$ and $6\%$ improvement of test accuracy compared to the model with simple fine-tuning when there are 1,000 and 2,500 labeled examples, respectively.  These empirical results verify that self-distillation can effectively adapt the pre-trained model to the target dataset even if there are extremely small amounts of labeled data.

\vspace{-0.12in}
\paragraph{Ablation Study}
We perform ablation study to verify the effectiveness of each component of self-distillation. In Table~\ref{tab:ablation}, we show empirical results on both the CUB dataset and SCIERC data set while removing or replacing various components of self-distillation. Firstly, we remove masked auto-encoding objective $\mathcal{L}_\texttt{MAE}$ and train the model with only distillation loss $\mathcal{L}_\texttt{Distill}$ before fine-tuning. On image dataset CUB, it does not make a significant difference, however, removing the masked auto-encoding objective degrades the generalization performance of the language model on text classification dataset SCIERC. Alternatively, we remove the distillation loss $\mathcal{L}_\texttt{Distill}$ in ~\eqref{eq:self-distill}, which results in further pre-training method. Furthermore, we continue to pre-train the model for twice longer steps as the original further pre-training method, denoted as Further Pre-train$\times2$, to show that higher test accuracy of self-distillation is not a consequence of longer pre-training. Both of the models significantly underperform self-distillation, which shows the effectiveness of the self-distillation loss. Lastly, we perform experiments for variants of distillation loss $\mathcal{L}_\texttt{Distill}$ in~\eqref{eq:self-distill}.
Instead of matching representation of teacher and student, we enforce the reconstruction of masked inputs by teacher and student to be consistent, i.e.,   $\minimize_{\theta, \phi }\norm{g_{\phi}\circ f_{\theta}(\hat{\rvx}) - g_{\phi_0} \circ f_{\theta_0}(\hat{\rvx})}^2_2$ for ViT or $ \minimize_{\theta,\phi} \sum_{t=1}^T\KL\left(p_{\theta_0,\phi_0}(x_t|\hat{\rvx})\parallel p_{\theta,\phi}(x_t|\hat{\rvx}) \right)$ for RoBERTA, denoted as Prediction-Matching. 
Furthermore, we replace the distillation loss with the one minimizing $L_2$ or MARS distance between the parameters of student and teacher, denoted as  Weight-Matching. As shown in Table~\ref{tab:ablation}, all these variants are not effective compared to the one minimizing the distance between hidden representations of the student and teacher.

\vspace{-0.12in}
\begin{figure*}[t!]
    \begin{minipage}{0.305\textwidth}
        \centering
        \includegraphics[width=0.95\textwidth]{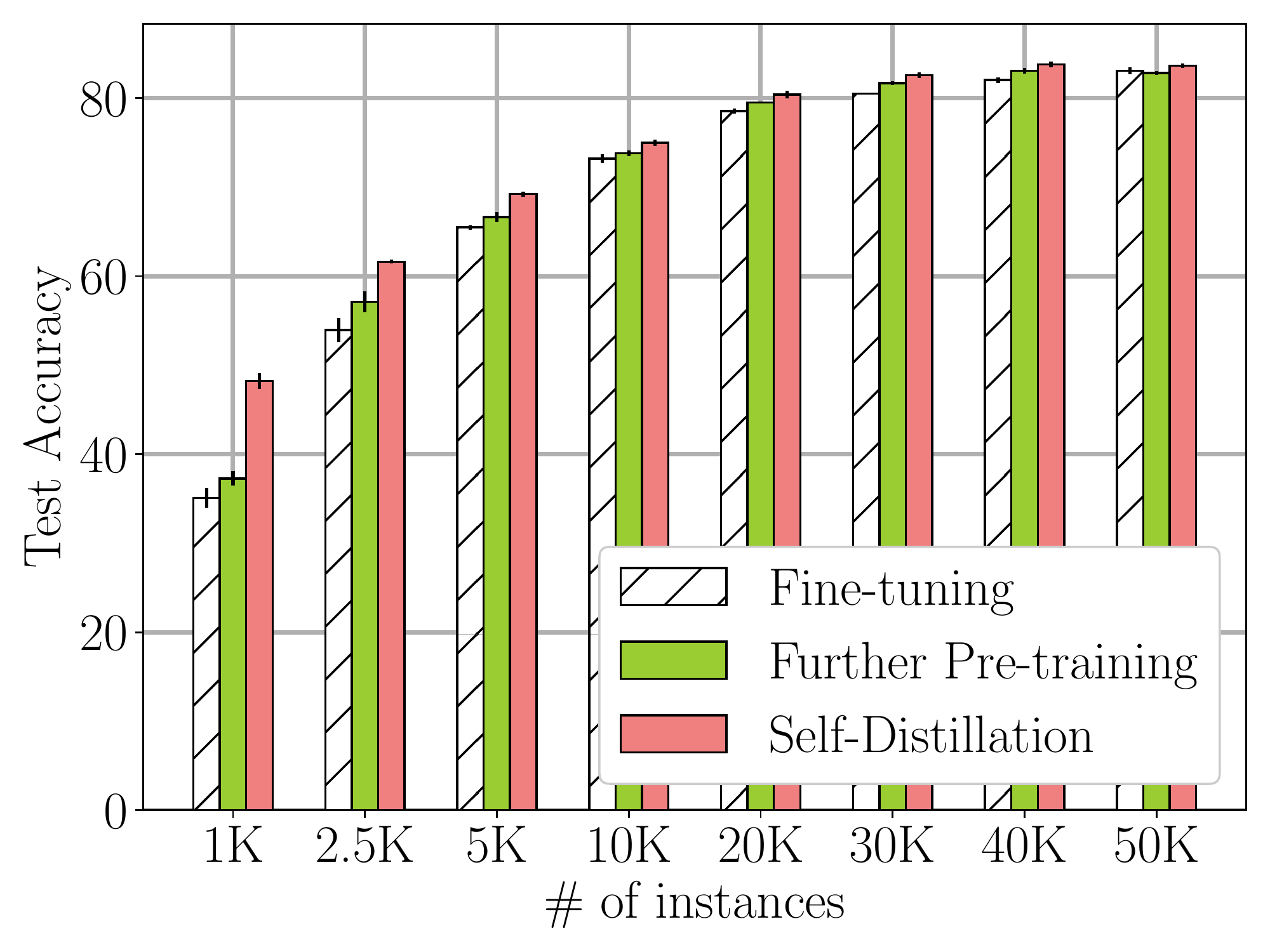}
          \vspace{-0.15in}
          \caption{\small Accuracy with varying the number of training data.}
          \vspace{-0.1in}
          \label{fig:low-resource}
    \end{minipage}
    \hfill
    \begin{minipage}{0.40\textwidth}
    	\centering
    	\captionof{table}{\small Ablation on CUB and SCIERC.}
    	\vspace{-0.1in}
    	\resizebox{0.99\textwidth}{!}{
        \begin{tabular}{lcc}
        \toprule
        \textbf{Model} & \textbf{CUB} & \textbf{SCIERC} \\
        \midrule
        Full Model & $\textbf{58.06}\pm\textbf{0.90}$ & $\textbf{81.79}\pm\textbf{0.75}$ \\ 
        \midrule
        w/o $\mathcal{L}_\texttt{MAE}$ & $57.60\pm0.81$ & $80.66\pm0.62$ \\
        w/o $\mathcal{L}_\texttt{Distill}$ & $55.72\pm{0.46}$ & $80.32\pm1.25$ \\
        Further Pre-train$\times2$ & $53.41\pm0.75$ & $80.52\pm0.98$ \\
        Prediction-Matching &  $55.27\pm1.07$ & $81.09\pm1.07 $\\
        Weight-Matching ($\ell_2$)  & $53.70\pm0.91$ & $80.54\pm1.21$\\
        Weight-Matching (MARS) & $54.82\pm0.60$ & $80.95\pm0.71$\\ 
        \bottomrule
        \end{tabular}
        }
    	\label{tab:ablation}
    \end{minipage}
    \hfill
    \begin{minipage}{0.28\textwidth}
        \centering
        \includegraphics[width=0.95\textwidth]{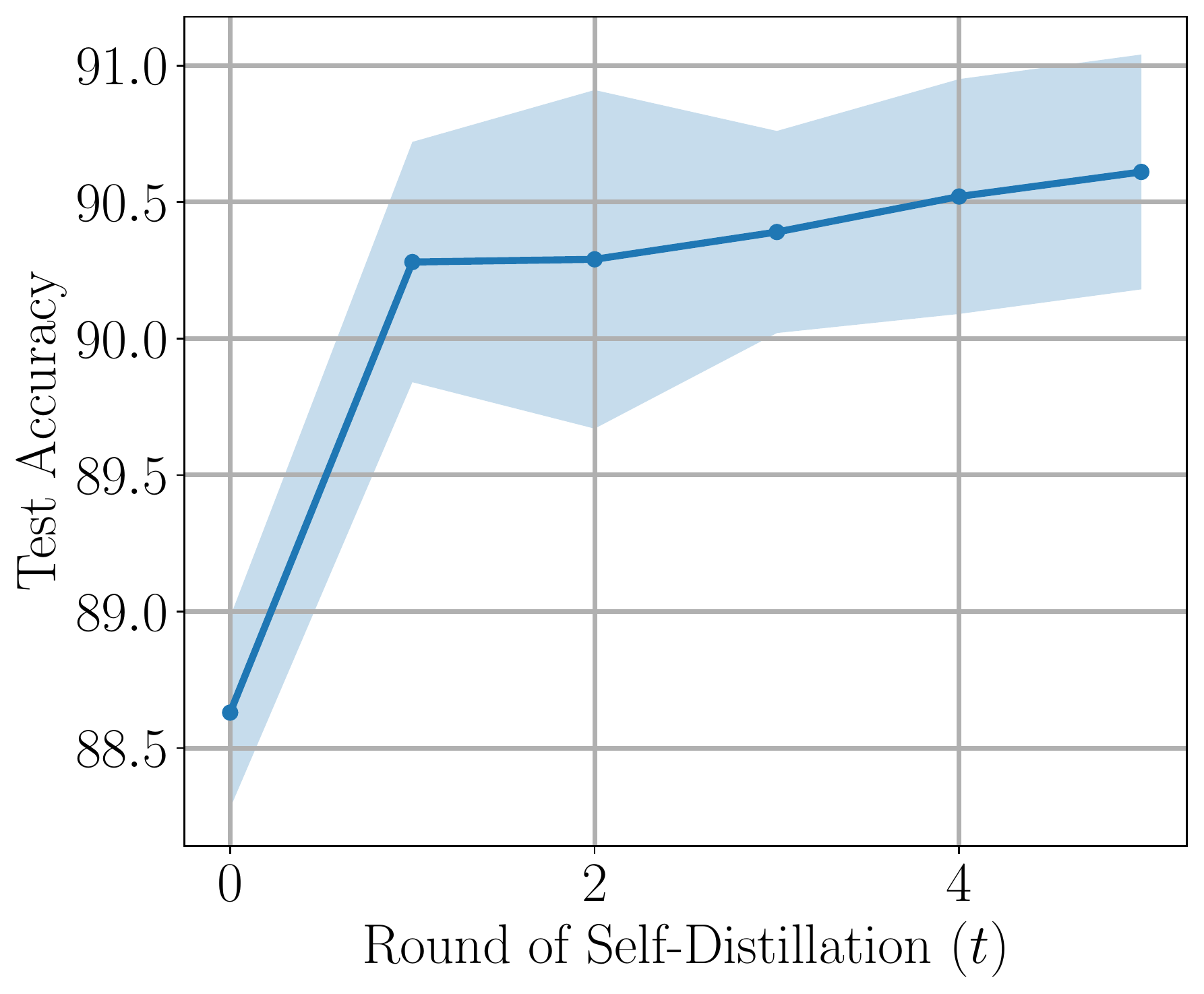}
        \vspace{-0.1in}
        \caption{\small Test accuracy with varying self-distillation round.}
        \vspace{-0.1in}
        \label{fig:multi-round-acc}
    \end{minipage}
    \vspace{-0.15in}
\end{figure*}
\paragraph{Multi-Round of Self-Distillation} Lastly, we empirically show that the first round of self-distillation plays the most significant role in improving generalization performance. Specifically, we fine-tune each model after $t$ round of self-distillation and plot the test accuracy on Oxford 102 Flower dataset, where $0$ round of self-distillation $(t=0)$ denotes the model with further pre-training. As shown in Figure~\ref{fig:multi-round-acc}, the first round of self-distillation significantly improves the test accuracy of the model with further pre-training and the gain by self-distillation becomes marginal after the first round. Considering the extra computational cost and marginal improvement of multi-round self-distillation, we perform a single round of self-distillation for all the experiments.

\subsection{Further Analysis}
In this subsection, we present numerical experiments to analyze why self-distillation can potentially help improve the generalization performance of downstream tasks compared to further pre-training and empirically show that  Theorem~\ref{thm:1} and~\ref{prop:2} can be extended to deep neural networks --- transformers.

\begin{figure*}
\centering
	\begin{subfigure}[b]{.32\textwidth}
		\centering
		\includegraphics[height=3.6cm]{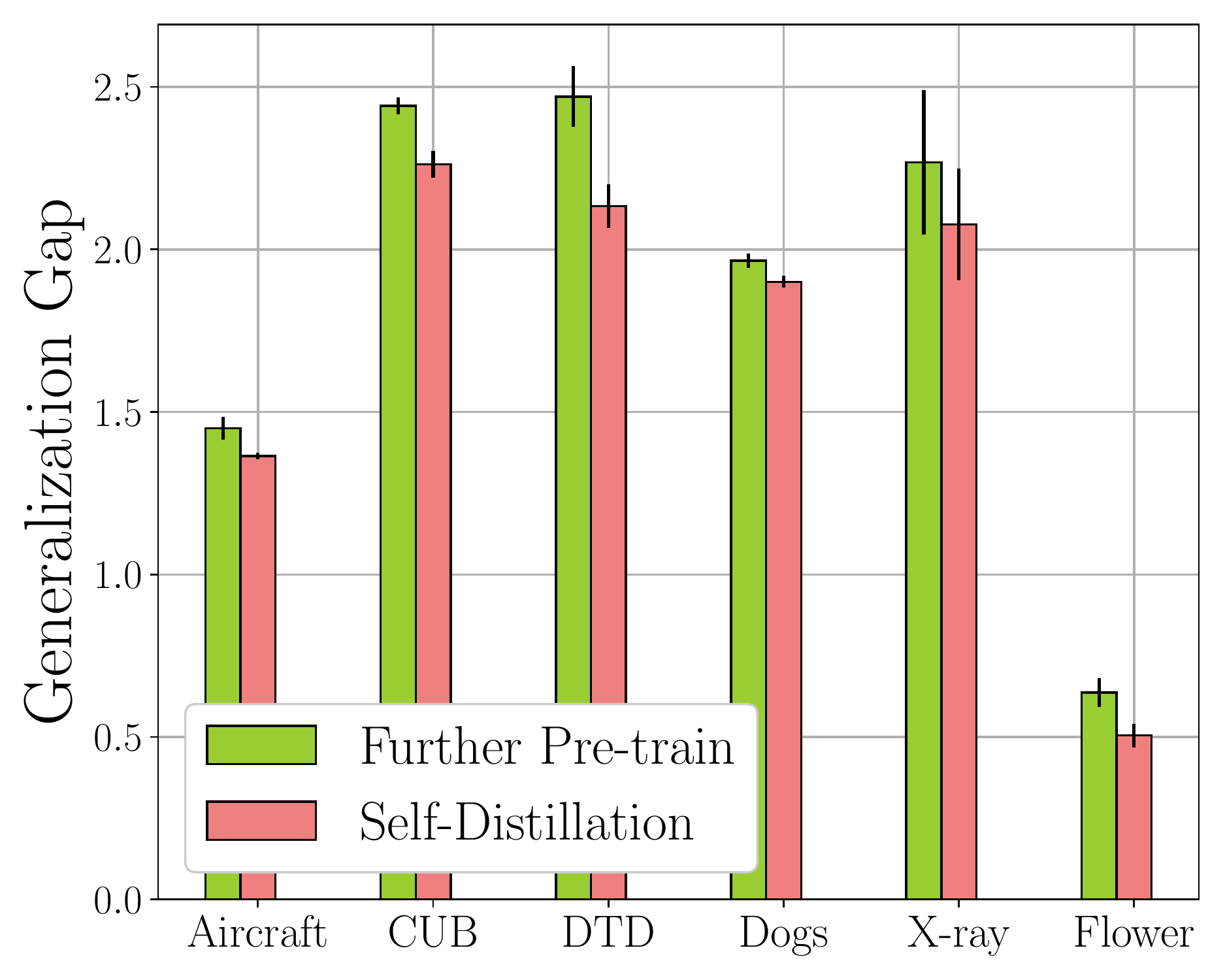}
		\captionsetup{justification=centering,margin=0.5cm}
            \vspace{-0.22in}
            \caption{\small }
		\label{fig:gap}
	\end{subfigure}
	\begin{subfigure}[b]{.32\textwidth}
		\centering
		\includegraphics[height=3.6cm]{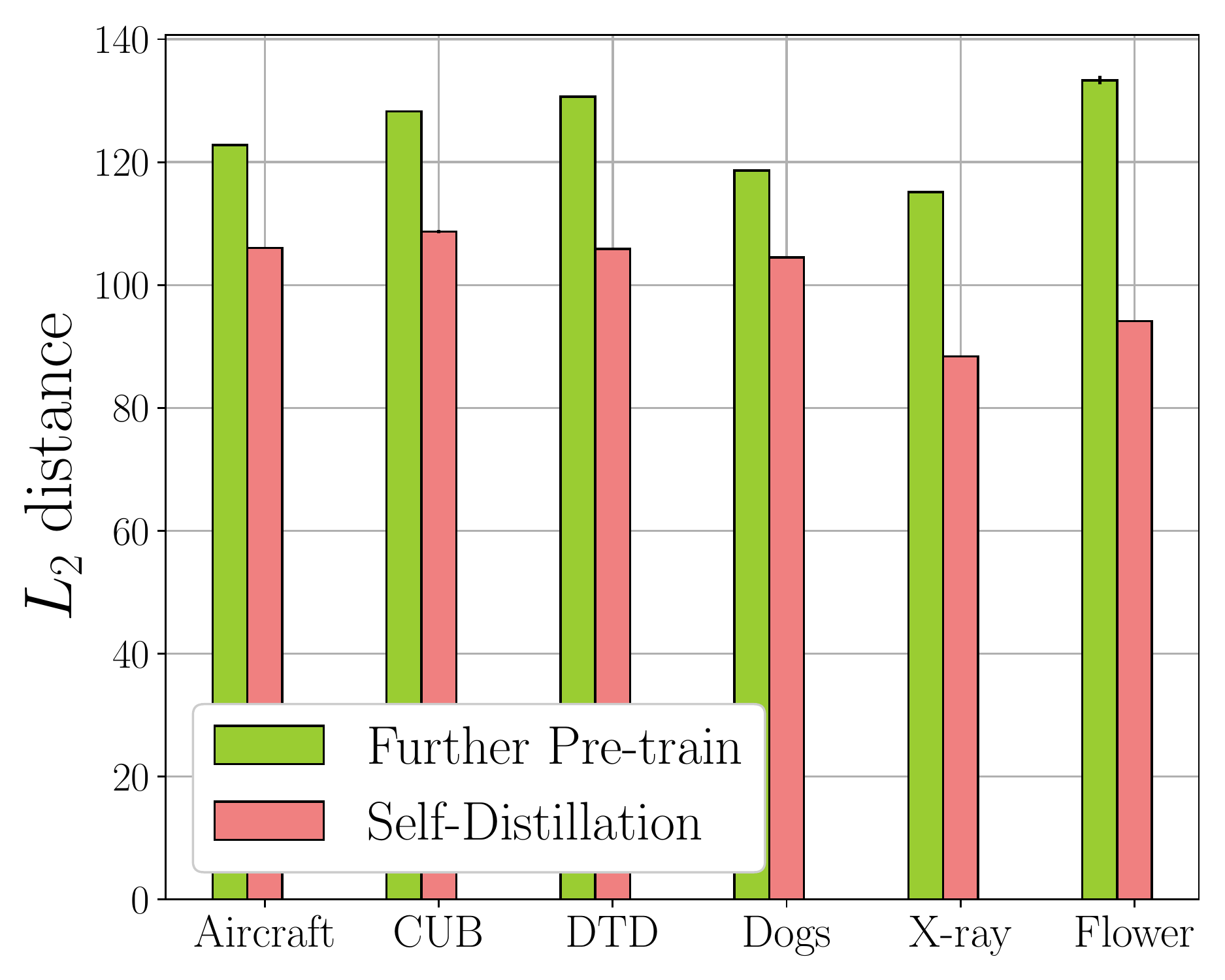}
		\captionsetup{justification=centering,margin=0.5cm}
            \vspace{-0.22in}
            \caption{\small }
		\label{fig:distance}
	\end{subfigure}
	\begin{subfigure}[b]{0.32\textwidth}
		\centering
		\includegraphics[height=3.6cm]{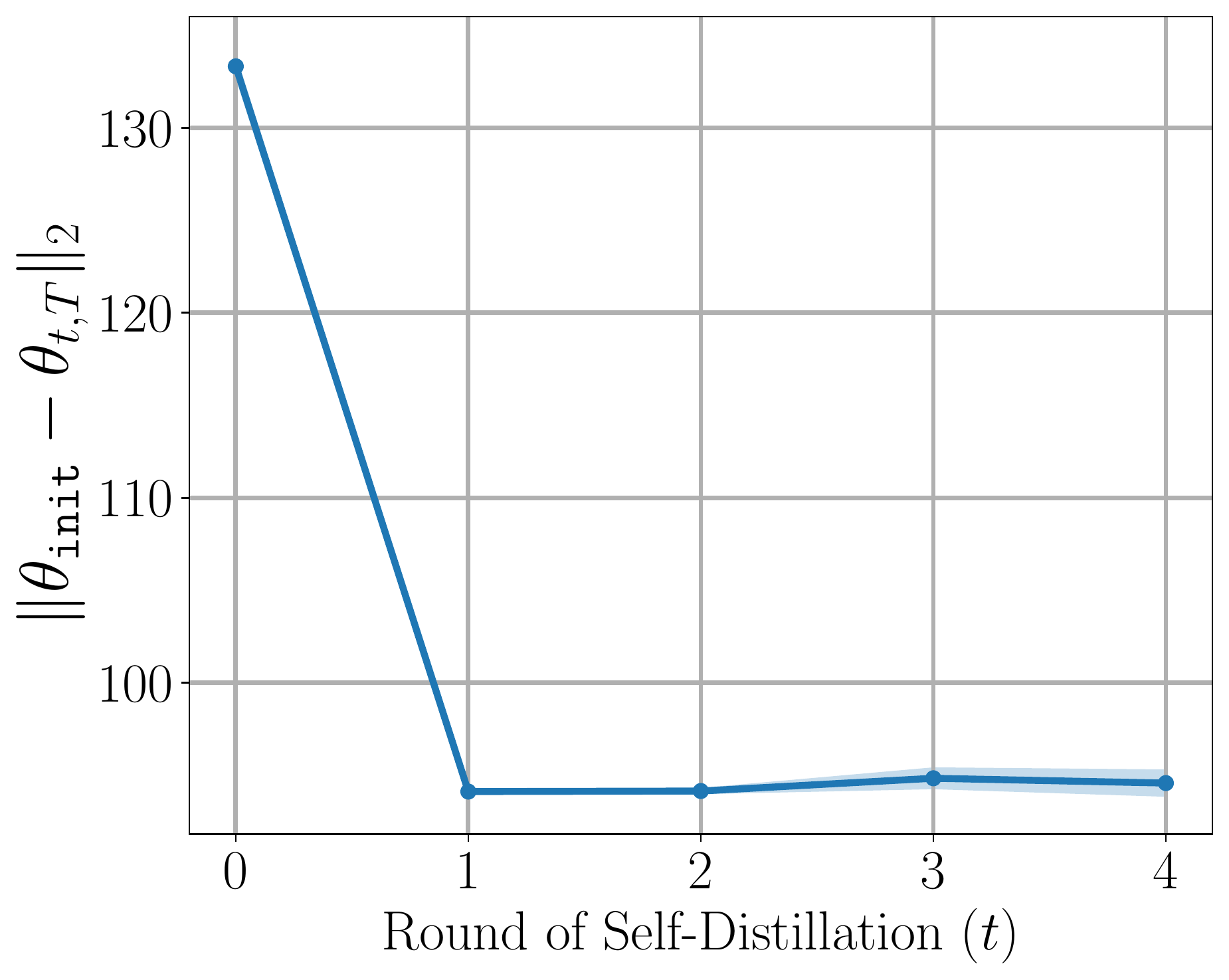}
		\captionsetup{justification=centering,margin=0.5cm}
            \vspace{-0.22in}
            \caption{\small}
		\label{fig:round-distance}
	\end{subfigure}
	\vspace{-0.1in}
	\caption[analysis]{\small \textbf{(a) Generalization gap}: Difference between supervised test loss and training loss. \textbf{(b) Effect of self-distillation  on distance:} Distance between the initial pre-trained weights and the final fine-tuned weights of further pre-training and self-distillation. \textbf{(c) Effect of multi-round self distillation:} Distance between the initial pre-trained weights and the final fine-tuned weights for each round of self-distillation $t\in\NN_+$.} 
	\vspace{-0.12in}
	\label{fig:further-analysis}
\vspace{-0.1in}
\end{figure*}

\textbf{(a) Generalization gap:}  In Figure~\ref{fig:gap},  we plot the generalization gap, which is test loss minus training loss on each labeled dataset, of self-distillation and further pre-training method. Self-distillation  improves the generalization gap of the further pre-training method across all the datasets. It is consistent with Theorem~\ref{thm:1} showing that self-distillation with a simplified model strictly decreases the generalization bound on the supervised loss of the fine-tuning stage.

\textbf{(b) Effect of self-distillation on distance:} To empirically validate Theorem~\ref{prop:2} about regularization effects by self-distillation on $L_2$ distance between the initial pre-trained weight $\theta_\texttt{init}$ and the final weight after fine-tuning, we plot the distance obtained from self-distillation and further pre-training. Specifically, we compare the distance $\norm{\theta_\texttt{init}- \theta_{1, T}}_2$ and $\norm{\theta_\texttt{init}-\theta_{0,T}}_2$, where $\theta_{t,\tau}$ is the parameter after $t$ round of self-distillation and $\tau$ steps of gradient descent for fine-tuning. As shown in Figure~\ref{fig:distance}, self-distillation consistently decreases the distance and the reduced distance correlates with the better generalization gap in Figure~\ref{fig:gap}. These empirical results  confirm the connection between the $L_2$ distance from the initialization and generalization bound~\citep{nagarajan2019generalization}. 

\textbf{(c) Effect of multi-round self-distillation:} Lastly, we empirically verify part of Theorem~\ref{prop:2} which shows that the first round of self-distillation plays the most critical role of regularization on the $L_2$ distance between the initial pre-trained weight $\theta_\texttt{init}$ and the final weight $\theta_{t,T}$ denoted as the parameter after  $t$ round of self-distillation and $T$ steps of gradient descent for fine-tuning on VGG flower 102 dataset. As shown in Figure~\ref{fig:round-distance},  self-distillation significantly decreases the distance at the first round $(t=1)$ and the regularization effect on the distance diminishes afterward, where $0$ round of self-distillation $(t=0)$ denotes the model with further pre-training but without self-distillation.

\section{Conclusion}
To effectively adapt pre-trained transformers to a target domain, we proposed self-distillation as a regularization for further pre-training. Specifically, we first took the initial pre-trained transformer and continued to pre-train it with the masked auto-encoding objective on the target unlabeled dataset and considered the encoder part of the model as a teacher for self-distillation. Then we took the copy of the same initial pre-trained model as a student and enforced representations of the student to be close to those of the teacher while optimizing the student with the masked auto-encoding objective on the target unlabeled dataset. Finally, we fine-tuned the self-distilled student on the target labeled dataset. Our empirical evaluation on various image and text classification benchmark datasets showed that self-distillation consistently improved generalization performance compared to relevant baselines. Lastly, we provided the theoretical analysis of the proposed method with a simplified model to understand how self-distillation for further pre-training can potentially help improve the generalization performance of the downstream tasks.


\section*{Reproducibility Statement}
We use Pytorch~\citep{pytorch} and transformers library~\citep{huggingface} from Huggingface to implement all the baselines and our proposed method in the experiments. We have described our method of self-distillation for further pre-training in Algorithm~\ref{algo:self-distillation} and specified all the experimental setup including hyperparameters in Section~\ref{sec:exp} and Appendix~\ref{appendix:hparams}. For theoretical analysis, we have provided all the proofs in Appendix~\ref{app:1}.  

\section*{Acknowledgments}
This work was supported by Institute of Information \& communications Technology Planning \& Evaluation (IITP) grant funded by the Korea government(MSIT)  (No.2019-0-00075, Artificial Intelligence Graduate School Program(KAIST)), 
the Engineering Research Center Program through the National Research Foundation of Korea (NRF) funded by the Korean Government MSIT (NRF-2018R1A5A1059921), 
Institute of Information \& communications Technology Planning \& Evaluation (IITP) grant funded by the Korea government(MSIT) (No. 2021-0-02068, Artificial Intelligence Innovation Hub), 
Institute of Information \& communications Technology Planning \& Evaluation (IITP) grant funded by the Korea government(MSIT) (No. 2022-0-00184, Development and Study of AI Technologies to Inexpensively Conform to Evolving Policy on Ethics), 
Institute of Information \& communications Technology Planning \& Evaluation (IITP) grant funded by the Korea government(MSIT) (No.2022-0-00713), KAIST-NAVER Hypercreative AI Center, and
Samsung Electronics (IO201214-08145-01). This material is based upon work supported by the Google Cloud Research Credits program with the award (6NW8-CF7K-3AG4-1WH1).

\bibliography{iclr2023_conference}
\bibliographystyle{iclr2023_conference}

\clearpage
\appendix
\section*{Appendix}


\section{Proofs} \label{app:1}

We also define the  model output vector $f_t \in \RR^{n \times p}$ by $(f_t)_{ij} =f(x_{i},w_{t})_j$. For example, $f_0$ is the initial teacher label matrix. Let $[n]=\{1,\dots,n\}$.  Denote the rank of $[I_p \otimes \Phi]$ by $r=\rank([I_p \otimes \Phi]) \le np$. 
Define $\bU=[u_1,u_{2}\dots,u_{r}] \in \RR^{dp\times r}$ and $\Pb_{r}=I-\bU \bU\T$, which is the projection matrix onto the null space of $\bU\T$.
We first prove  the following lemma, which will be  used in the proofs of Theorem \ref{thm:1} and Theorem \ref{prop:2} later:  
\begin{lemma} \label{lemma:1}
For any $t\in \NN_0$,
$w_{t,0} = \sum_{i=1}^r \alpha_{i,t}   \tilde u_i + \one\{t=0\}v$, where $\alpha_{i,t}=\frac{ 1}{\sigma_i}\left(\frac{1}{1+ (n\lambda/\sigma_i^2 )}\right)^t\in \RR$, $\tilde u_i=\ty_i u_{i} \in \RR^{dp}$,  $\ty_i = ( V\T   \vect[f_0])_{i} \in \RR$, and $v=\Pb_{r} w_{0,0}$.

\end{lemma}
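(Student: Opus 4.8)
The plan is to notice that each round of self-distillation is exactly a ridge-regression problem, solve it in closed form via the SVD of $[I_p\otimes\Phi]$, and then unroll the resulting diagonal linear recursion; the whole statement is a bookkeeping exercise once the loss $L_t$ is written in the right coordinates.

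First I would record the vectorization identity $[I_p\otimes\Phi]\T w=\vect[F(w)]$, where $F(w)\in\RR^{n\times p}$ is the matrix with entries $F(w)_{ij}=f(x_i,w)_j$: indexing coordinates of $[I_p\otimes\Phi]\T$ by $(j-1)n+i$ and using $\Phi_{ki}=\varphi(x_i)_k$ together with the convention $w=\vect[W\T]$ makes this immediate, and in particular $\vect[f_{t-1}]=[I_p\otimes\Phi]\T w_{t-1,0}$. Hence, for $t\ge1$,
\[
L_t(w)=\tfrac1n\bigl\|[I_p\otimes\Phi]\T w-\vect[f_{t-1}]\bigr\|_2^2+\lambda\|w\|_2^2,
\]
which is strictly convex since $\lambda>0$; its unique minimizer solves the normal equations $\bigl(\tfrac1n[I_p\otimes\Phi][I_p\otimes\Phi]\T+\lambda I\bigr)w=\tfrac1n[I_p\otimes\Phi]\vect[f_{t-1}]$, and substituting $[I_p\otimes\Phi]=U\Sigma V\T$ and $V\T V=I_{np}$ collapses this to $w_{t,0}=\sum_{i=1}^{r}\frac{\sigma_i}{\sigma_i^2+n\lambda}\bigl(V\T\vect[f_{t-1}]\bigr)_i u_i$. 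This representation already shows $w_{t,0}\in\mathrm{span}(u_1,\dots,u_r)$, hence $\Pb_r w_{t,0}=0$, for every $t\ge1$.

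Next I would turn this into a recursion. From $\vect[f_{t-1}]=V\Sigma\T U\T w_{t-1,0}$ we read $(V\T\vect[f_{t-1}])_i=\sigma_i\,u_i\T w_{t-1,0}$, so with $c_{i,t}:=u_i\T w_{t,0}$ the formula above becomes $c_{i,t}=\frac{\sigma_i^2}{\sigma_i^2+n\lambda}c_{i,t-1}=(1+n\lambda/\sigma_i^2)^{-1}c_{i,t-1}$ for $i\le r$, whence $c_{i,t}=(1+n\lambda/\sigma_i^2)^{-t}c_{i,0}$. Applying the same identity to $f_0$ gives $\tilde y_i=(V\T\vect[f_0])_i=\sigma_i c_{i,0}$, i.e. $c_{i,0}=\tilde y_i/\sigma_i$, and therefore $c_{i,t}=\alpha_{i,t}\tilde y_i$ and $c_{i,t}u_i=\alpha_{i,t}\tilde u_i$ with $\alpha_{i,t}$ and $\tilde u_i$ exactly as in the statement. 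For $t\ge1$, since $w_{t,0}\in\mathrm{span}(u_1,\dots,u_r)$, we get $w_{t,0}=\sum_{i=1}^r c_{i,t}u_i=\sum_{i=1}^r\alpha_{i,t}\tilde u_i$. For $t=0$, decompose $w_{0,0}=\bU\bU\T w_{0,0}+\Pb_r w_{0,0}=\sum_{i=1}^r c_{i,0}u_i+v=\sum_{i=1}^r\alpha_{i,0}\tilde u_i+v$ with $v=\Pb_r w_{0,0}$ and $\alpha_{i,0}=1/\sigma_i$; combining the two cases with the indicator $\one\{t=0\}$ yields the claimed formula.

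The only genuinely delicate step is the first one: getting the Kronecker/vectorization identity right under the convention $w=\vect[W\T]$, and being careful about the padding of the rectangular diagonal $\Sigma$ (and the possibility $r<np$) when inverting $\tfrac1n\Sigma\Sigma\T+\lambda I$. After that, everything is a one-line geometric-series unrolling, and the $t=0$ case is precisely where the component $v=\Pb_r w_{0,0}$ of $w_{0,0}$ orthogonal to $\mathrm{span}(u_1,\dots,u_r)$ survives, which is what the $\one\{t=0\}$ term encodes — and also the feature that Theorem~\ref{prop:2} later exploits, since a single distillation round removes that component.
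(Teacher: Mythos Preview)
Your proposal is correct and follows essentially the same approach as the paper: solve the ridge-regression normal equations in the SVD basis of $[I_p\otimes\Phi]$, unroll the resulting diagonal recursion, and handle $t=0$ by the orthogonal decomposition $w_{0,0}=\bU\bU\T w_{0,0}+\Pb_r w_{0,0}$. The only cosmetic difference is that the paper unrolls the recursion on $\vect[f_t]=VA^tV\T\vect[f_0]$ and then substitutes back into $w_t$, whereas you unroll directly on the scalar coefficients $c_{i,t}=u_i\T w_{t,0}$; both lead to the same formula with the same amount of work.
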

\begin{proof}[Proof of Lemma \ref{lemma:1}]
Define $w_{t}\coloneqq w_{t,0}$ for $t \in \NN_+$. 
The necessary condition on the solution $w_t$ at  step $t$ is that $\nabla L(w_{t})=0$. Thus, by solving $\nabla L(w_{t})=0$ for $w_t$, we have that $w_{t}=([I_p \otimes \Phi] [I_p \otimes \Phi]\T+n\lambda  I)^{-1}[I_p \otimes \Phi] \vect[f_{t-1}]$.
By using the singular value decomposition $[I_p \otimes \Phi]=U \Sigma V\T$,  since $UU\T =U\T U= I$ and $V\T V=I$, we have that
\begin{align*}
w_t =(U \Sigma \Sigma\T U\T+n\lambda I) ^{-1}U \Sigma V\T\vect[  f_{t-1}]&=(U (\Sigma \Sigma\T +n\lambda I)U\T) ^{-1}U \Sigma V\T  \vect[f_{t-1}]
\\ & =U(\Sigma \Sigma\T +n\lambda I)^{-1} \Sigma V\T  \vect[f_{t-1}]. 
\end{align*}
Therefore, $w_t =U(\Sigma \Sigma\T +n\lambda I)^{-1} \Sigma V\T  \vect[f_{t-1}$]. Using this and $[I_p \otimes \Phi]=U \Sigma V\T$, 
\begin{align*}
\vect[f_t ]= \vect[\Phi\T W\T _{t}I_p]=[I_p \otimes \Phi]\T w_t &= [I_p \otimes \Phi]\T  U(\Sigma \Sigma\T +n\lambda I)^{-1} \Sigma V\T  \vect[f_{t-1} ]
\\ & =  V\Sigma\T (\Sigma \Sigma\T +n\lambda I)^{-1} \Sigma V\T  \vect[f_{t-1}].
\end{align*}
Therefore, $\vect[f_t]=VA V\T  \vect[f_{t-1}]$ where  $A=\Sigma\T (\Sigma \Sigma\T +n\lambda I)^{-1} \Sigma$
. Repeating  this process for $\vect[f_{t-1}]$, since  $V\T V=I$,
$$
\vect[f_t ]=VA V\T  VA V\T  \cdots VA V\T  \vect[f_0 ]=VA^{t}  V\T \vect[f_0].     
$$ 
Plugging this equation of $\vect[f_{t-1}]=VA^{t-1}  V\T \vect[f_0]$ into the equation of $w_t =U(\Sigma \Sigma\T +n\lambda I)^{-1} \Sigma V\T  \vect[f_{t-1}]$, we have that 
$$
w_t =U(\Sigma \Sigma\T +n\lambda I)^{-1} \Sigma V\T  VA^{t-1}  V\T \vect[f_0 ]=U BA^{t-1}  V\T \vect[f_0]
$$
where $B = (\Sigma \Sigma\T +n\lambda I)^{-1} \Sigma$. Here, we can rewrite the matrix $B \in \RR^{dp \times np}$ as
$$
B= \begin{bmatrix} \bar B \\
\mathbf{0}_{(dp-np )\times np} \\
\end{bmatrix} 
$$
where $\mathbf{0}_{(dp-np )\times np} $ is the $(dp-np)$ by $np$ matrix with all entries being zero, and $\bar B \in \RR^{np \times np}$ is a diagonal matrix defined by 
$$
\bar B_{ii} \coloneqq \sigma_i (\sigma_i^2 +n\lambda)^{-1}. 
$$ 
Using this  $B$ in  the above equation of $w_t =U BA^{t-1}  V\T \vect[f_0]$,
 
\begin{align*}
w_t &=U \begin{bmatrix} \bar B \\
\mathbf{0}_{(dp-np )\times np} \\
\end{bmatrix} A^{t-1} V\T\vect[ f_0]\\
&= \begin{bmatrix}u_1 & u_{2} & \cdots & u_{dp}\end{bmatrix} \begin{bmatrix} \bar BA^{t-1}  \\
\mathbf{0}_{(dp-np )\times np} \\
\end{bmatrix} V\T \vect[f_0 ]\\
&=\bar U \bar BA^{t-1}V\T \vect[f_0]
\end{align*}

where $\bar U=[u_1,u_{2}\dots,u_{np}] \in \RR^{dp\times np}$. Since the matrix $A=\Sigma\T (\Sigma \Sigma\T +n\lambda I)^{-1} \Sigma\in \RR^{np \times np}$ is a  diagonal matrix with its  entry being 
$
A_{ii} = \sigma_i ^{2}(\sigma_i^2 +n\lambda)^{-1}, 
$
this can be further simplified as

\begin{align*}
w_t &=\bar U \bar BA^{t-1}V\T\vect[ f_0] 
\\ & = \bar U\begin{bmatrix}\frac{\sigma^{1}_1}{\sigma_1^2 +n\lambda} &  &  \\
 & \ddots &  \\
 &  & \frac{\sigma^{1}_{np}}{\sigma_{np}^2 +n\lambda} \\
\end{bmatrix} \begin{bmatrix} \frac{\sigma_1 ^{2}}{\sigma_1^2 +n\lambda} &  &  \\
 & \ddots &  \\
 &  & \frac{\sigma_{np} ^{2}}{\sigma_{np}^2 +n\lambda} \\
\end{bmatrix}^{t-1} \begin{bmatrix}\ty_1  \\
\ty_2 \\
\vdots \\
\ty_{np} \\
\end{bmatrix}  
\\ & = \begin{bmatrix}u_1 & u_{2} & \cdots & u_{np}\end{bmatrix}   \begin{bmatrix}\sigma_1 (\sigma_1^2 +n\lambda)^{-1}(\sigma_1^{2}(\sigma_1^2 +n\lambda)^{-1})^{t-1}\ty_1  \\
\sigma_2 (\sigma_2^2 +n\lambda)^{-1}(\sigma_2 ^{2}(\sigma_2^2 +n\lambda)^{-1})^{t-1}\ty_2 \\
\vdots \\
\sigma_{np} (\sigma_{np}^2 +n\lambda)^{-1}(\sigma_{np} ^{2}(\sigma_{np}^2 +n\lambda)^{-1})^{t-1}\ty_{np} \\
\end{bmatrix} 
\\ & = \sum_{i=1}^{np}\sigma_i (\sigma_i^2 +n\lambda)^{-1}(\sigma_i ^{2}(\sigma_i^2 +n\lambda)^{-1})^{t-1}  \ty_i u_i
\\ & = \sum_{i=1}^r\sigma_i (\sigma_i^2 +n\lambda)^{-1}(\sigma_i ^{2}(\sigma_i^2 +n\lambda)^{-1})^{t-1}  \ty_i u_i 
\end{align*}
where the last line follows from the fact that  $\sigma_i (\sigma_i^2 +n\lambda)^{-1}(\sigma_i ^{2}(\sigma_i^2 +n\lambda)^{-1})^{t-1}  =0$ for all  $i > r$.
Since $\sigma_i (\sigma_i^2 +n\lambda)^{-1}(\sigma_i ^{2}(\sigma_i^2 +n\lambda)^{-1})^{t-1}  =\sigma_i ^{2t-1}(\sigma_i^2 +n\lambda)^{-t}=\frac{1}{\sigma_i}(\frac{\sigma_i^{2}}{\sigma_i^2 + n\lambda})^t$ for $i \le r$, this implies that 
$$
w_t = \sum_{i=1}^r \frac{1}{\sigma_i} \left(\frac{\sigma_i^{2}}{\sigma_i^2 + n\lambda}\right)^t  \ty_i u_i = \sum_{i=1}^r \frac{1}{\sigma_i} \left(\frac{1}{1+ (n\lambda/\sigma_i^2 )}\right)^t  \ty_i u_i   
$$
Since  $t\in \NN_+$ was arbitrary,  this holds for any $t\in \NN_+$. This proves the first statement of the theorem for any $t\in \NN_+$. For $t=0$, since
$$
\ty_i = ( V\T   \vect[f_0])_{i}= ( V\T   [I_p \otimes \Phi]\T w_{0,0})_{i}=( V\T   V\Sigma\T U\T w_{0,0})_{i}=(\Sigma\T U\T w_{0,0})_{i} =\sigma_i u_i\T w_{0,0},
$$
we have that 
$$
\sum_{i=1}^r \frac{1}{\sigma_i} \left(\frac{1}{1+ (n\lambda/\sigma_i^2 )}\right)^t  \ty_i u_i=\left(\sum_{i=1}^r u_i u_i\T\right) w_{0,0} = \bU \bU \T w_{0,0}.
$$
Thus, 
$$
w_{0,0} = \bU \bU \T w_{0,0}+(I- \bU \bU \T )w_{0,0}=\sum_{i=1}^r \frac{1}{\sigma_i} \left(\frac{1}{1+ (n\lambda/\sigma_i^2 )}\right)^t  \ty_i u_i+(I- \bU \bU \T )w_{0,0}. 
$$
Since $(I- \bU \bU \T )w_{0,0}=\Pb_{r} w_{0,0}$, this completes the first statement of the theorem for any $t\in \NN_0$.

\end{proof}

\subsection{Proof of Theorem \ref{thm:1}}
\begin{proof}
Define $Z\coloneqq [I_p \otimes \Phi]\T \in \RR^{\bn \times \bd}$ where $\bn = np$ and $\bd=dp$.
Then,$$
\Lcal(w)=\frac{1}{2}\sum_{i=1}^n \|f(x_{i},w)-y_{i}\|^{2}_2=\frac{1}{2} \|Zw-Y\|_{2}^2
$$
where $Y = \vect[[y_1 ,\dots, y_n]\T] \in \RR^{\bn}$. Since $\nabla \Lcal(w_{t,\tau})= Z\T(Zw_{t,\tau}-Y)$, 
\begin{align*}
\frac{d w_{t,\tau}}{d \tau} = -  Z\T(Zw_{t,\tau}-Y)
\end{align*}
Since  $\rank(\Phi)=n$ and $d \ge n$,  we have $\rank(Z)=\bn$ by the property of the Kronecker product with the identity matrix.  Since $\rank(Z)=\bn$, there exists $v\in \RR^{\bd}$ such that $Y=Zv$. Thus, 
\begin{align*}
\frac{d w_{t,\tau}}{d\tau} &= - Z\T(Z w_{t,\tau}-Zv)
\\ & = -  Z\T Z( w_{t,\tau}-v)
\\ & = -  Z\T Z( w_{t,\tau}-v).
\end{align*}
Since $Z\T=U \Sigma V\T$, we have $Z\T Z=U \Sigma \Sigma\T U\T= \sum_{i=1}^\bn \sigma_i^2  u_i u_i\T$. Thus, 
\begin{align*}
\frac{d w_{t,\tau}}{d \tau} &= -   \left( \sum_{i=1}^\bn \sigma_i^2  u_i u_i\T \right)(w_{t,\tau}-v)= -  \sum_{i=1}^\bn \sigma_i^2  u_i u_i\T (w_{t,\tau}-v).
\end{align*} 
Since the columns of $U$ forms the basis of $\mathbb{R}^\bd$ and $w,v \in \RR^\bd$, we can write $w_{t,\tau}= \sum_{k=1}^{\bd} c_k ^{(t, \tau)}u_k$ and $v=\sum_{k=1}^\bd q_k u_k$ for some $c_k ^{(t, \tau)}$ and $q_k$. Thus, 
\begin{align*}
\frac{d w_{t,\tau}}{d \tau} &= -   \sum_{i=1}^\bn \sigma_i^2  u_i u_i\T  \sum_{k=1}^{\bd} (c_k  ^{(t, \tau)}-q_k)u_k
\\ & =-  \sum_{i=1}^\bn \sum_{k=1}^{\bd}  \sigma_i^2  (c_k  ^{(t, \tau)}-q_k)u_i u_i\T  u_k 
 \\ & =-  \sum_{i=1}^\bn  \sigma_i^2  (c_i  ^{(t, \tau)}-q_i)u_i.  
\end{align*} 
Using $w_{t,\tau}= \sum_{k=1}^{\bd} c_k  ^{(t, \tau)}u_k$ for the right-hand side too, we have that 
$$
\frac{d}{d \tau}  \sum_{i=1}^{\bd} c_i  ^{(t, \tau)}u_i=-  \sum_{i=1}^\bn  \sigma_i^2  (c_i  ^{(t, \tau)}-q_i)u_i.
$$
This implies that for all $i \in \{1,\dots,\bn\}$,
$$
\frac{d}{d \tau} c_i  ^{(t, \tau)}=-\sigma_i^2  (c_i  ^{(t, \tau)}-q_i),
$$
and $\frac{d}{d \tau} c_i  ^{(t, \tau)}=0$ for all $i \notin \{1,\dots,\bn\}$. This can be also seen by the fact that $\frac{d w_{t,\tau}}{d\tau} = -   Z\T(Z w_{t,\tau}-Zv)$ with $Z\T=U \Sigma V\T$ and thus the dynamics only adds components of $u_i$ for  $i \in \{1,\dots,\bn\}$,
and not for  $i \notin \{1,\dots,\bn\}$. Thus, for components of $u_i$ for  $i \notin \{1,\dots,\bn\}$, the initial values stays. In other words, for  $i \notin \{1,\dots,\bn\}$,
$$
c_i ^{(t, \tau)}= c_{i}^{(t, 0)}. 
$$   
On the other hand, for  $i \in \{1,\dots,\bn\}$, since $\frac{d}{d \tau} q_i=0$,
\begin{align*}
\frac{d}{d \tau} (c_i  ^{(t, \tau)}-q_i) = \frac{d}{d \tau} c_i  ^{(t, \tau)}=-\sigma_i^2  (c_i  ^{(t, \tau)}-q_i).
\end{align*}
Solving this for $(c_i  ^{(t, \tau)}-q_i)$, we have that for  $i \in \{1,\dots,\bn\}$,
$$
c_i  ^{(t, \tau)}-q_i = (c_i  ^{(t, 0)}-q_i)  e^{-\sigma_i^2 \tau}.
$$
This implies that 
$$
c_i  ^{(t, \tau)} =q_i+(c_i  ^{(t, 0)}-q_i)  e^{-\sigma_i^2 \tau}=q_i(1-  e^{-\sigma_i^2  \tau})+c_i  ^{(t, 0)}  e^{-\sigma_i^2 \tau}. 
$$
Combining these with $w_{t,T}= \sum_{k=1}^{\bd} c_k  ^{(t,T)}u_k$,
\begin{align} \label{eq:4}
w_{t,T}= \sum_{i=1}^{\bd} c_i  ^{(t,T)}u_i = \sum_{i=1}^{\bn}q_i(1-  e^{-\sigma_i^2T})u_i+\sum_{i=1}^{\bn}c_i  ^{(t, 0)}  e^{-\sigma_i^2T}u_i+\sum_{i=\bn+1}^{\bd}c_{i}^{(t, 0)} u_i. 
\end{align}
Therefore, for any particular $s \in \Scal$,
since  $U=[u_1,u_{2}\dots,u_{dp}]\in \RR^{{dp}\times{dp}}$ is an orthogonal matrix,\begin{align} \label{eq:1}
\|\Acal_{t}(s)\|_{2}^2=\|w_{t,T}\|_2^2  \le  \sum_{i=1}^{\bn}\left(q_i(1-  e^{-\sigma_i^2T})\right)^2 + \sum_{i=1}^{\bn}(c_i  ^{(t, 0)})^2  e^{-2\sigma_i^2 T}+\sum_{i=\bn+1}^{\bd}(c_{i}^{(t, 0)})^2.
\end{align}
where $q_i, \sigma_i$, and $c_i^{(t, 0)}$ all depend on $s$. 

By using Lemma 4 of \citep{pham2021combined} and taking union bound with   $\PP(s \notin \Scal) \le \delta$,  with probability at least $1-\delta$, we  have that  $w_{t,T} \in \Fcal_{t}$ and  the following holds: 
\begin{align} \label{eq:2}
\EE_{x,y}[\ell_{ }(w_{t,T},x,y)]
\le \frac{1}{n}\sum_{i=1}^{n} \ell_{ }(w_{t,T},x_{i},y_{i})+2\Rcal_{n}(\Fcal_t)+M \sqrt{\frac{\ln(2/\delta)}{2n}}, \end{align}
where $\Rcal_{n}(\Fcal_t)=\EE_{s,\xi}[\sup_{w\in\Fcal_t}\frac{1}{n} \sum_{i=1}^n \xi_i \|W\varphi(x_{i})-y_{i}\|^{2}_2]$, $s=((x_i,y_i))_{i=1}^n$, $w=\vect[W\T]$, and $\xi_1,\dots,\xi_n$ are independent uniform random variables taking values in $\{-1,1\}$. 
By using Corollary 4 of \citep{maurer2016vector},
there exits a constant $c$ (only depending on $M$) such that,
\begin{align*}
\Rcal_{n}(\Fcal_t) &\le  \frac{c}{n}\EE_{s,\xi}[\sup_{w\in\Fcal_t} \sum_{i=1}^n  \sum_{k=1}^p\xi_{ik} W_{k}\varphi(x_{i})]
 \\ & = \frac{c}{n}\EE_{s,\xi}[\sup_{w\in\Fcal_t} \sum_{k=1}^p W_{k}\sum_{i=1}^n  \xi_{ik} \varphi(x_{i})]
  \\ & = \frac{c}{n}\EE_{s,\xi}[\sup_{w\in\Fcal_t} w\T h]
\end{align*}
where $W_{k}$ is the $k$-th row of $W$, $\xi_{ik}$ are independent uniform random variables taking values in $\{-1,1\}$, $h=\vect[H]\in \RR^{dp}$, and $H \in \RR^{d\times p}$ with $H_{jk}=\sum_{i=1}^n  \xi_{ik} \varphi(x_{i})_{j}$. Thus, 
 \begin{align*}
 \Rcal_{n}(\Fcal_t) \le \frac{c}{n}\EE_{s,\xi}[\sup_{w\in\Fcal_t}  \|w\|_2 \|h\|_2] =\frac{c(\sup_{w\in\Fcal_t}  \|w\|_2)}{n}\EE_{s,\xi}[ \|h\|_2]  
 \end{align*}
Here, 
\begin{align}
\EE_{s,\xi}[ \|h\|_2]=\EE_{s,\xi} \sqrt{\sum_{j=1}^d \sum_{k=1}^p \left(\sum_{i=1}^n \xi_{ik} \varphi(x_{i})_{j}\right)^2} & \le \sqrt{\sum_{j=1}^d \sum_{k=1}^p \EE_{s,\xi}\left(\sum_{i=1}^n \xi_{ik} \varphi(x_{i})_{j}\right)^2}  \nonumber
\\ & = \sqrt{ \sum_{j=1}^d \sum_{k=1}^p \EE_{s}\sum_{i=1}^n(\varphi(x_{i})_{j})^2} \label{eq:cross}
\\ & = \sqrt{ \sum_{k=1}^p \sum_{i=1}^n \EE_{s}\sum_{j=1}^d(\varphi(x_{i})_{j})^2} \nonumber
\\ & = \sqrt{ \sum_{k=1}^p \sum_{i=1}^n \EE_{s}\norm{\varphi(x_{i})}^2_2} \nonumber
\\ & \le R \sqrt{ pn } \nonumber
\end{align}
Equation~\ref{eq:cross} holds since  
$$
\EE_{s, \xi}\left[\left(\xi_{ik}\phi(x_i)_j\right)\cdot \left(\xi_{lk}\phi(x_l)_j\right)\right]=\EE_s\left[\one\{i=l\}\phi(x_i)_j \phi(x_l)_j\right]
$$ 
for all $i,l\in [n]$.

Thus, 
\begin{align} \label{eq:3}
\Rcal_{n}(\Fcal_t) \le\frac{cR\sqrt{p}(\sup_{w\in\Fcal_t}  \|w\|_2)}{\sqrt{n}}.
\end{align}
Define 
$$
\zeta_{t}(s) \coloneqq \sqrt{\sum_{i=1}^{\bn}\left(q_i(1-  e^{-\sigma_i^2T})\right)^2 + \sum_{i=1}^{\bn}(c_i  ^{(t, 0)})^2  e^{-2\sigma_i^2 T}+\sum_{i=\bn+1}^{\bd}(c_{i}^{(t, 0)})^2}. 
$$
where $q_i, \sigma_i$, and $c_i^{(t, 0)}$ all depend on $s$. With this, we define $$
\zeta(t) \coloneqq \sup_{s \in \Scal}  \zeta_{t}(s).
$$
Then, by combining \eqref{eq:1}, \eqref{eq:2}, and \eqref{eq:3}, with probability at least $1-\delta$,  the following holds: 
$$
\EE_{x,y}[\ell_{ }(w_{t,T},x,y)]
\le \frac{1}{n}\sum_{i=1}^{n} \ell_{ }(w_{t,T},x_{i},y_{i})+\zeta(t) \sqrt{\frac{4c^2 R^2p}{n}}+M \sqrt{\frac{\ln(2/\delta)}{2n}}.
$$
Finally, from Lemma \ref{lemma:1}, for any $t\in \NN_0$ and $i \in\{1,\dots,\bn\}$, 
$$
(c_i  ^{(t, 0)})^2 =\left(\frac{1}{\sigma_i} \left(\frac{1}{1+ (n\lambda/\sigma_i^2 )}\right)^t  \ty_{i} \right)^2 .
$$  
Since $\frac{1}{1+(n \lambda/\sigma_i^{2})}< 1$ (because $n \lambda/\sigma_i^{2}>0$), the value of $\left(\frac{1}{1+ (n\lambda/\sigma_i^2 )}\right)^{2t}$ strictly decreases as $t$ increases. Since $\frac{1}{\sigma_i^2}>0$ and $\ty_i^2\ge0$, this implies  that $(c_i  ^{(t, 0)})^2$ is strictly decreasing in $t \in \NN_0$ unless $c_i  ^{(t, 0)}=0$. Moreover, from Lemma \ref{lemma:1}, we have that 
$$
w_{t,0} = \sum_{i=1}^\bn \alpha_{i,t}  \ty_i u_i + \one\{t=0\}(I-\bU\bU\T) w_{0,0}.
$$
Since $\{u_1, \ldots, u_\bd\}$ is a orthonormal basis for $\mathbb{R}^\bd$ with inner product $\ip{x}{y} = y\T x$, we  get
$$
w_{0,0} = \sum_{i=1}^\bd (u_i\T w_{0,0})u_i.
$$
Since $\bU\bU\T w_{0,0} = \sum_{i=1}^{\bn} (u_{i}\T w_{0,0}) u_i$, we have that
$$
(I-\bU\bU\T) w_{0,0}= \sum_{i=1}^{\bd} (u_{i}\T w_{0,0}) u_i-\sum_{i=1}^{\bn} (u_{i}\T w_{0,0}) u_i = \sum_{i=\bn+1}^{\bd} (u_{i}\T w_{0,0}) u_i,
$$
which implies that the $u_i$ component to span $w_{t,0}$ for $i \in\{\bn+1,\dots,\bd\}$ is only present in $(I-\bU\bU\T) w_{0,0}$.
In other words,
$$
w_{t,0} = \sum_{i=1}^\bn \alpha_{i,t}  \ty_i u_i + \sum_{i=\bn+1}^{\bd} \one\{t=0\}(u_{i}\T w_{0,0}) u_i.
$$
Thus, for any $t\in \NN_0$ and $i \in\{\bn+1,\dots,\bd\}$, we have that
$$
(c_i  ^{(t, 0)})^2 =\one\{t=0\}(u_{i}\T w_{0,0})^2.
$$
These implies that $\zeta(t)$ is strictly decreasing in $t \in \NN_0$ unless $w_{0,0}=0$. 
\end{proof} 

\subsection{Proof of Theorem \ref{prop:2}}
\begin{proof}
In this proof, we continue to use the results and the notation from the proof of  Theorem~\ref{thm:1}. By using \eqref{eq:4} in the proof of Theorem \ref{thm:1}, we have that
$$
\norm{\wi -  w_{t,T}}_2= \norm{\wi -v_{t}}_2,
$$
where
$$
v_{t} =  \sum_{i=1}^{\bn}q_i(1-  e^{-\sigma_i^2T})u_i+\sum_{i=1}^{\bn}c_i  ^{(t, 0)}  e^{-\sigma_i^2T}u_i+\sum_{i=\bn+1}^{\bd}c_{i}^{(t, 0)} u_i.
$$
If $\wi = -\alpha v_{t}$ for some $\alpha>0$, then
\begin{align*}
 \norm{\wi -v _{t}}_2&=\norm{v _{t}+\alpha v_t}_2
 \\ &=(1+\alpha)\norm{v_{t}}_2
 \\ &=\norm{v_{t}}_2+\norm{\alpha v_t}_2
 \\ &= \sqrt{\sum_{i=1}^{\bn}q_i^{2}(1-  e^{-\sigma_i^2T})^2+\sum_{i=1}^{\bn}(c_i  ^{(t, 0)}  )^{2}e^{-2\sigma_i^2T}+\sum_{i=\bn+1}^{\bd}(c_{i}^{(t, 0)} )^{2}}+\norm{\wi}_2. 
\end{align*}
On the other hand, for any $\wi \in \RR^{dp}$, 
\begin{align*}
\norm{\wi - v_t}_2 &\le\norm{v_{t}}_2+\norm{\wi}_2
\\ &\le \sqrt{\sum_{i=1}^{\bn}q_i^{2}(1-  e^{-\sigma_i^2T})^2+\sum_{i=1}^{\bn}(c_i  ^{(t, 0)}  )^{2}e^{-2\sigma_i^2T}+\sum_{i=\bn+1}^{\bd}(c_{i}^{(t, 0)} )^{2}}+\norm{\wi}_2.  
\end{align*}
Thus, setting $\psi (t)$ to be the following function  satisfies  conditions (1) and (2) in the statement:
$$
\psi (t)\coloneqq\sqrt{\sum_{i=1}^{\bn}q_i^{2}(1-  e^{-\sigma_i^2T})^2+\sum_{i=1}^{\bn}(c_i  ^{(t, 0)}  )^{2}e^{-2\sigma_i^2T}+\sum_{i=\bn+1}^{\bd}(c_{i}^{(t, 0)} )^{2}}+\norm{\wi}_2
 $$ 
Finally, from Lemma \ref{lemma:1}, for any $t\in \NN_0$ and $i \in\{1,\dots,\bn\}$,  
$$
(c_i  ^{(t, 0)})^2 =\left(\frac{1}{\sigma_i} \left(\frac{1}{1+ (n\lambda/\sigma_i^2 )}\right)^t  \ty_{i} \right)^2.
$$  
which is strictly decreasing in $t \in \NN_0$ unless $c_i  ^{(t, 0)}=0$ for all  $i \in\{1,\dots,\bn\}$ as shown  in the proof of Theorem \ref{thm:1}. Moreover, from Lemma \ref{lemma:1}, for any $t\in \NN_0$ and $i \in\{\bn+1,\dots,\bd\}$,  
$$
(c_i  ^{(t, 0)})^2 =\one\{t=0\}(u_{i}\T w_{0,0})^2. 
$$
That is,
$$
\psi (t)=\sqrt{G_1+\psi_1 (t) +\sum_{i=\bn+1}^{\bd}\one\{t=0\}(u_{i}\T w_{0,0})^2} + G_2,
 $$ 
where
$$
G_1\coloneqq\sum_{i=1}^{\bn}q_i^{2}(1-  e^{-\sigma_i^2T})^2, 
$$
$$
\psi_1 (t) \coloneqq \sum_{i=1}^{\bn}\left(\frac{1}{\sigma_i} \left(\frac{1}{1+ (n\lambda/\sigma_i^2 )}\right)^t  \ty_{i} \right)^2e^{-2\sigma_i^2T},
$$
and
$$
G_2 \coloneqq\norm{\wi}_2. 
$$
Since $e^{-2\sigma_i^2T} > 0$ is a constant in $t$ and we have previously shown that $\left(\frac{1}{\sigma_i} \left(\frac{1}{1+ (n\lambda/\sigma_i^2 )}\right)^t  \ty_{i} \right)^2$ is strictly decreasing in $t\in\NN_0$ unless $w_{0,0}=0$. It implies that both $\psi_1 (t)$ and $\psi(t)$ are strictly decreasing in $t \in \NN_0$ unless $w_{0,0}=0$. 
\end{proof}

\begin{remark}
Note that Theorem~\ref{prop:2} also shows the distance between the weight of the teacher $w_{t-1, T}$ and the initial pre-trained weight $\wi$ for all $t\in \mathbb{N}_+$. Since the teacher at $t^\prime\in\mathbb{N}_+$ round of self-distillation used to be a student of the $t^\prime-1$ round of the self-distillation and Theorem~\ref{prop:2} holds for all non-negative integer $t\in\mathbb{N}_0$, the distance between the initial weight and the teacher weight  $\lVert \wi - w_{t-1, T}\rVert_2$ strictly decreases for all $t\in\mathbb{N}_+$. For instance, at $t=1$, we obtain the following inequality
\begin{equation*}
    \lVert \wi - w_{0, T} \rVert_2 \leq \psi(0),
\end{equation*}
where $w_{0,T}$ is the weight of the initial teacher \emph{without self-distillation}. 
\end{remark}


\section{Additional Experiments}
In this section, we perform additional experiments to better analyze the proposed method, self-distillation for further pre-training.

\begin{table}[t]
\centering
\small
\caption{Training and test loss.}
\vspace{-0.1in}
\resizebox{0.99\textwidth}{!}{
\begin{tabular}{lccccccc}
\toprule
Method                                                    & Split       & Aircraft             & CUB                  & Chest-Xray           & DTD                  & Dogs                 & Flower               \\
\midrule
\multicolumn{1}{c}{\multirow{2}{*}{Further Pre-training}} & Train (log) & $-9.13 \pm 0.09$ & $-8.58 \pm 0.05$ & $-8.10 \pm 1.90$ & $-3.70 \pm 0.37$ & $-5.41 \pm 0.13$ & $-9.70 \pm 0.32$ \\
\multicolumn{1}{c}{}                                      & Test        & $1.44 \pm 0.03$  & $2.44 \pm 0.02$  & $2.26 \pm 0.22$  & $2.49 \pm 0.08$  & $1.96 \pm 0.02$  & $0.63 \pm 0.04$  \\
\midrule
\multirow{2}{*}{Self-Distillation}                        & Train (log) & $-9.17 \pm 0.08$ & $-8.16 \pm 0.31$ & $-6.08 \pm 0.50$ & $-4.08 \pm 0.30$ & $-5.50 \pm 0.06$ & $-7.98 \pm 1.28$ \\
                                                          & Test        & $\textbf{1.36} \pm \textbf{0.01}$  & $\textbf{2.26} \pm \textbf{0.04}$  & $\textbf{2.08} \pm \textbf{0.17}$  & $\textbf{2.15} \pm \textbf{0.06}$  & $\textbf{1.90} \pm \textbf{0.01}$  & $\textbf{0.50} \pm \textbf{0.03}$ \\
\bottomrule
\end{tabular}
}
\label{tab:loss}
\vspace{-0.1in}
\end{table}


\begin{wrapfigure}{t}{0.4\textwidth}
\small
\centering
\captionof{table}{\small Comparison with supervised contrastive learning.}
\vspace{-0.1in}
\resizebox{0.4\textwidth}{!}{
\begin{tabular}{lcc}
\toprule
\textbf{Model} & \textbf{X-ray} & \textbf{Flower} \\
\midrule
SupCon & $75.30\pm1.03$ &  $76.70\pm1.76$\\
Fine-tuning & $77.15\pm0.52$ & $88.78\pm0.65$ \\
Further Pre-training & $77.79\pm 2.06$ & $88.63\pm0.35$\\
Self-Distillation &  $\textbf{79.68}\pm\textbf{1.05}$ &  $\textbf{79.68}\pm\textbf{0.44}$ \\
\bottomrule
\end{tabular}
}
\vspace{-0.1in}
\label{tab:sup-con}
\end{wrapfigure}

\paragraph{Supervised Contrastive Learning} For the main experiments on Section~\ref{sec:exp}, we use the same objective function for pre-training and further pre-training. One may wonder what happens if we use different objective at further pre-training stage. In this experiment, we continue to pre-train the encoder of the pre-trained transformer~\citep{mae} with supervised contrastive loss~\citep{sup-con} while fine-tuning a randomly initialized linear classifier with cross-entropy loss. As shown in Table~\ref{tab:sup-con}, supervised contrastive learning (SupCon) significantly degrades the generalization performance of Vision Transformer on both X-ray and Flower datasets. Based on this experimental result, we conjecture that the transformer pre-trained with a masked auto-encoding objective might not be compatible with the contrastive loss and thus we may use the same objective for pre-training and further pre-training.  

\paragraph{Generalization Gap} As shown in Figure~\ref{fig:gap}, self-distillation decreases generalization gap compared to further pre-training. Additionally, we separately report training and test loss in Table~\ref{tab:loss}. Both of further pre-training and self-distillation reach near zero training loss, but self-distillation achieves way lower test loss than further pre-training as a consequence of regularization induced by self-distillation.

\begin{wrapfigure}{t}{0.22\textwidth}
\small
\centering
\vspace{-0.4in}
\captionof{table}{\small Down weighting MAE objective.}
\vspace{-0.1in}
\resizebox{0.22\textwidth}{!}{
\begin{tabular}{cc}
\toprule
$\lambda$ & \textbf{CUB}  \\
\midrule
1.0  & $58.06\pm0.90$ \\
0.5  & $57.76\pm0.17$ \\
0.3 & $58.21\pm0.42$ \\
0.1 & $57.76\pm0.33$ \\
\bottomrule
\end{tabular}
}
\vspace{-0.1in}
\label{tab:weight-mae}
\end{wrapfigure}

\paragraph{Down Weighting of Masked Auto-Encoding}
Although we fix both weight of self-distillation and masked auto-encoding objective to 1 in~\eqref{eq:self-distill}, we vary $\lambda \in (0, 1]$, the weight of masked auto-encoding objective $(\lambda\mathcal{L}_\texttt{MAE}(\theta, \phi;\mathcal{D}^u)+\mathcal{L}_\texttt{Distill}(\theta; \theta_0, \mathcal{D}^u))$  and report test accuracy of Vision Transformer with self-distillation on the CUB dataset. As shown in Table~\ref{tab:weight-mae}, our proposed method is insensitive to the value of  $\lambda$ and thus there is no benefit to fine-tuning the weight of masked auto-encoding objective.

\paragraph{Extra Training Time} To better analyze extra computational cost for further pre-training and self-distillation, we report training wall clock time for fine-tuning, further pre-training, and self-distillation, respectively. We train a Vision Transformer~\citep{vit} on CUB dataset with 3090 RTX GPU and Intel(R) Xeon(R) Silver 4210R CPU. It takes 32 minutes and 18 seconds for fine-tuning the transformer. We need an extra 1 hour 29 minutes 33 seconds for further pre-training and 5 hours 13 minutes 22 seconds for self-distillation.

\color{black}
\section{Masked Auto-Encoding}\label{appendix:mae}
In this section, we describe the masked auto-encoding objective from~\eqref{eq:mae} in more detail. Given a sequence $\rvx=(x_1, \ldots, x_K)$ with length $K$, we sample mask $\rvz=(z_1, \ldots, z_K)$ from a Binomial distribution $p_{\gamma, K}$ with probability for success $\gamma \in (0,1)$ and the number of trials $K$. For each $x_k$, we replace it with the special token ``mask" if $z_k=1$. Otherwise we use the same $x_k$ for an masked input. Let $\hat{\rvx}=(\hat{x}_1, \ldots, \hat{x}_K)$ be a masked input and let $f_\theta, g_\phi$ be encoder and decoder, respectively. We want to compute the log-likelihood of the reconstructed input $\sum_{k=1}^K z_k \log p_{\theta, \phi}(x_k | \hat{\rvx})$.

For language models, reconstruction of the masked input $\hat{x}_k$ is predicting which token is masked out of pre-defined vocabulary with its size  $V$, where each token is represented as an integer from $\{1,\ldots, V\}$. Thus the conditional probability of $x_k \in \{1,\ldots, V\}$ given $\hat{\rvx}$  is parameterized as follows:
\begin{equation*}
\begin{gathered}
    p_{\theta, \phi}(x_k | \hat{\rvx})= \frac{\exp(u_{x_k}))}{\sum_{j=1}^V\exp(u_j)} \\
    \text{where } (u_1, \ldots, u_V) = g_\phi(\rvh_k) \in \mathbb{R}^V, \quad 
    \begin{bmatrix} 
    \vert & & \vert \\
    \rvh_1 & \cdots & \rvh_K \\
    \vert & & \vert
    \end{bmatrix}=f_\theta (\hat{\rvx}) \in \mathbb{R}^{h\times K}.
\end{gathered}
\end{equation*}
For ViT, the sequence $\rvx$ consists of image patches and  the reconstruction of the masked input  is predicting pixel values for each masked patches, which is a regression problem. Thus, we parameterize the conditional probability of a patch $x_k = (x_{k,1}, \ldots, x_{k,m}) \in \mathbb{R}^m$ given $\hat{\rvx}$ as follows:
\begin{equation*}
\begin{gathered}
    p_{\theta, \phi}(x_k | \hat{\rvx}) = \prod_{i=1}^m \frac{1}{\sqrt{2\pi\sigma^2}}\exp\left( -\frac{(x_{k,i} - \mu_{k,i})^2}{2\sigma^2}   \right) \\
    \text{ where } \boldsymbol{\mu}_k = (\mu_{k,1}, \ldots, \mu_{k,m}) \in \mathbb{R}^m, \quad \begin{bmatrix}
        \vert & &\vert \\
        \boldsymbol{\mu}_1 & \cdots &\boldsymbol{\mu}_K \\
        \vert & &\vert \\
    \end{bmatrix} = f_\theta (\hat{\rvx}) \in \mathbb{R}^{m \times K}.
\end{gathered}
\end{equation*}

Since $\sigma >0$ and $\pi$ are constants with respect to $\theta$ and $\phi$, 
\begin{align*}
\argmin_{\theta, \phi} -\sum_{k=1}^K\log p_{\theta, \phi}(x_k|\hat{\rvx})
&=  \argmin_{\theta, \phi} \sum_{k=1}^K\left(\frac{1}{2\sigma^2}\sum_{j=1}^m (x_{k,j}- \mu_{k,j})^2 - m\log\left(\frac{1}{\sqrt{2\pi\sigma^2}}\right) \right) \\
&=\argmin_{\theta,\phi}\sum_{k=1}^K\left(\sum_{j=1}^m (x_{k,j}- \mu_{k,j})^2 \right)\\
&=\argmin_{\theta, \phi} \sum_{k=1}^K\norm{x_k-\boldsymbol{\mu}_k}_2^2.    
\end{align*}

\section{Dataset}
We describe statistics of all the image and text classification datasets used for our experiments in Table~\ref{tab:img-stat} and~\ref{tab:text-stat}.
\begin{table}[ht]
	\small
	\centering
	\caption{The number of training instances and classes for each image classification dataset.}
	\vspace{-0.1in}
	\begin{tabular}{lcccccc}
		\toprule
		\textbf{} & \textbf{Aircraft} & \textbf{CUB} & \textbf{Chest X-ray}& \textbf{DTD} & \textbf{Dogs} &\textbf{Flower}  \\
		\midrule
		\# of instances & 6,667 & 5,594 & 5,216 & 4,230  &12,000 & 2,040  \\
		\# of classes  & 100 & 200 & 2 & 47 & 120 & 102  \\
		\bottomrule
	\end{tabular}
	\label{tab:img-stat}
\vspace{-0.1in}
\end{table}
\begin{table}[ht]
	\small
	\centering
	\caption{The number of training instances and classes for each text classification dataset.}
	\vspace{-0.1in}
	\begin{tabular}{lcccc}
		\toprule
		\textbf{} & \textbf{SCIERC} & \textbf{ACL-ARC} & \textbf{Chemprot}& \textbf{Twitter-Emotion}   \\
		\midrule
		\# of instances & 3,219 & 1,688 & 4,169 & 4,230   \\
		\# of classes  & 7 & 6 & 13 & 47  \\
		\bottomrule
	\end{tabular}
	\label{tab:text-stat}
\vspace{-0.1in}
\end{table}

\label{appendix:dataset}

\clearpage
\section{Hyperparameters}
\label{appendix:hparams}
In Table~\ref{tab:hparams}, we summarize all the hyperparameters for Vision Transformer and RoBERTA.
\begin{table}[ht]
\caption{Hyperparameters for Vision Transformer and RoBERTA}
\vspace{-0.1in}
	\small
	\centering
	\begin{tabular}{c|cc}
	    \toprule
	    {Hyperparameters} 
	    & {Vision Transformer} & {RoBERTA} 
	    \\
	    \hline
		$\text{learning rate for pre-training}$
		& $1.5\cdot 10^{-4}$ & {$1\cdot 10^{-4}$}
		\\
            {learning rate for fine-tuning}& {$1\cdot 10^{-4}$} & {$2\cdot 10^{-5}$}\\
		{weight decay coefficient} & $1\cdot 10^{-2}$ & $1\cdot 10^{-2}$ \\
            batch-size for pre-training & 64 & 128 \\
            batch-size for fine-tuning & 32 & 16 \\
            learning rate scheduler & linear-decay & linear-decay \\
            fine-tuning steps & 10,000 & 10 epochs \\
            pre-training steps & 20,000 & 100 epochs \\
            round of self-distillation & 1 & 1 \\
		\bottomrule
	\end{tabular}
\label{tab:hparams}
\end{table}

\end{document}